\documentclass{article}





    \usepackage[preprint, nonatbib]{neurips_2023}

\usepackage[utf8]{inputenc} 
\usepackage[T1]{fontenc}    
\usepackage{hyperref}       
\usepackage{url}            
\usepackage{booktabs}       
\usepackage{amsfonts}       
\usepackage{nicefrac}       
\usepackage{microtype}      
\usepackage{times}
\usepackage{epsfig}
\usepackage{graphicx}
\usepackage{mathtools}
\usepackage{amssymb,amsmath,amsthm}
\newtheorem{theorem}{Theorem}
\newtheorem{assumption}{Assumption}
\newtheorem{corollary}{Corollary}
\newtheorem{lemma}{Lemma}
\usepackage{algorithm}
\usepackage{multirow}
\usepackage{subfigure}
\usepackage{tikz} 
\usetikzlibrary{arrows}
\usepackage{bbm}
\usepackage{svg}

\title{Global Update Tracking: A Decentralized Learning Algorithm for Heterogeneous Data}

%

\author{%
  Sai Aparna Aketi \hspace{2mm} Abolfazl Hashemi \hspace{2mm} Kaushik Roy\\
  Department of Electrical and Computer Engineering\\
  Purdue University\\
  West Lafayette, IN 47906 \\
  \texttt{\{saketi, abolfazl, kaushik\}@purdue.edu} \\
}

\begin{document}

\maketitle

\begin{abstract}
Decentralized learning enables the training of deep learning models over large distributed datasets generated at different locations, without the need for a central server. However, in practical scenarios, the data distribution across these devices can be significantly different, leading to a degradation in model performance. In this paper, we focus on designing a decentralized learning algorithm that is less susceptible to variations in data distribution across devices. We propose Global Update Tracking (GUT), a novel tracking-based method that aims to mitigate the impact of heterogeneous data in decentralized learning without introducing any communication overhead. We demonstrate the effectiveness of the proposed technique through an exhaustive set of experiments on various Computer Vision datasets (CIFAR-10, CIFAR-100, Fashion MNIST, and ImageNette), model architectures, and network topologies. Our experiments show that the proposed method achieves state-of-the-art performance for decentralized learning on heterogeneous data via a $1-6\%$ improvement in test accuracy compared to other existing techniques.
\end{abstract}

\section{Introduction}
Decentralized learning is a branch of distributed optimization which focuses on learning from data distributed across multiple agents without a central server. 
Decentralized learning methods offer many advantages over traditional centralized approaches in core aspects such as data privacy, fault tolerance, and scalability \cite{nedic2020distributed}.
It has been demonstrated that decentralized learning algorithms \cite{d-psgd} can perform comparable to centralized algorithms on benchmark vision datasets.
Decentralized Parallel Stochastic Gradient Descent (DSGD) presented in \cite{d-psgd} combines SGD with a gossip averaging algorithm \cite{gossip}.
Further, the authors analytically show that the convergence rate of DSGD is similar to its centralized counterpart \cite{dean2012large}. A momentum version of DSGD referred to as Decentralized Momentum Stochastic Gradient Descent (DSGDm) was proposed in \cite{balu2021decentralized}.
The authors in \cite{sgp} introduce Stochastic Gradient Push (SGP) which extends DSGD to directed and time-varying graphs.
Recently, a unified framework for the analysis of gossip-based decentralized SGD methods and the best-known convergence guarantees was presented in \cite{koloskova2020unified}. 

One of the key assumptions to achieve state-of-the-art performance by all the above-mentioned decentralized algorithms is that the data is independently and identically distributed (IID) across the agents. 
In particular, the data is assumed to be distributed in a uniform and random manner across the agents. 
This assumption does not hold in most of the real-world settings where the data distributions across the agents are significantly different (non-IID/heterogeneous) \cite{skewscout}. 
The effect of heterogeneous data in a peer-to-peer decentralized setup is a relatively under-studied problem and an active area of research. 

Recently, there have been quite a few efforts to bridge the performance gap between IID and non-IID data for a decentralized setup \cite{qgm, d2, dgt, cga, ngc, relaysgd}. Cross Gradient Aggregation \cite{cga} and Neighborhood Gradient Clustering \cite{ngc} algorithms utilize the concept of cross-gradients to reduce the impact of heterogeneous data and show significant improvement in performance (test accuracy). However, these techniques incur $2\times$ communication cost than the standard decentralized algorithms such as DSGD. $D^2$ algorithm proposed in \cite{d2} is shown to be agnostic to data heterogeneity and can be employed in deep learning tasks. One of the major limitations of $D^2$ is that its convergence requires mixing topologies with negative eigenvalue bounded from below by $-\frac{1}{3}$. Additionally, it has been shown that $D^2$ performs worse than DSGD in some cases \cite{qgm}. 

Tracking mechanisms such as Gradient Tracking (GT) \cite{next, dgt} and Momentum Tracking (MT) \cite{mt} have been proposed to tackle heterogeneous data in decentralized settings. But these algorithms improve the performance at the cost of $2\times$ communication overhead.
The authors in \cite{qgm} introduce Quasi-Global Momentum (QGM), a communication-free approach that mimics the global synchronization of momentum buffer to mitigate the difficulties of decentralized learning on heterogeneous data.
Recently, RelaySGD was presented in \cite{relaysgd} that replaces the gossip averaging step with RelaySum. Since RelaySGD deals with the gossip averaging step, it is orthogonal to the aforementioned algorithms and can be used in synergy with them. 
QG-DSGDm \cite{qgm} which incorporates QGM into DSGDm sets the current state-of-the-art for decentralized learning on heterogeneous data without increasing the communication cost. 
This work investigates the following question: \textit{Can we improve decentralized learning on heterogeneous data through a tracking mechanism without any communication overhead?}

To that effect, we present \emph{Global Update Tracking (GUT)}, a novel decentralized learning algorithm designed to improve performance under heterogeneous data distribution. Motivated by, yet distinct from, the gradient tracking mechanism, we propose to track the consensus model ($\Bar{x}^t$) by tracking global/average model updates,
where $x_i^t$ is the model parameters on agent $i$ at time step $t$ and $\Bar{x}$ is the averaged model parameters. 
In the traditional tracking-based methods \cite{dgt, mt} that track average gradients, each agent communicates both model parameters $x_i^t$ and the tracking variable $y_i^t$ with its neighbors resulting in $2\times$ communication overhead. 
The proposed \textit{GUT} algorithm overcomes this bottleneck by allowing agents to store a copy of their neighbors' model parameters and then tracking the model updates instead of the gradients.
This results in communicating only the tracking variable $y_i^t$ that yields the model update ($x_i^t - x_i^{t-1}$). We demonstrate the effectiveness of the proposed algorithm through an exhaustive set of experiments on various datasets, model architectures, and graph topologies. We also provide a detailed convergence analysis showing that the convergence rate of \textit{GUT} algorithm is consistent with the state-of-the-art decentralized learning algorithms. Further, we show that \textit{QG-GUTm} - Global Update Tracking with Quasi-Global momentum beats the current state-of-the-art decentralized learning algorithm (i.e., QG-DSGDm) on heterogeneous data under iso-communication cost.

\subsection{Contributions}
In summary, we make the following contributions.
\begin{itemize}
    \item We propose \emph{Global Update Tracking (GUT)}, a novel tracking-based decentralized learning algorithm to mitigate the impact of heterogeneous data distribution.
    \item We theoretically establish the non-asymptotic convergence rate of the proposed algorithm to a first-order solution.
    \item Through an exhaustive set of experiments on various datasets, model architectures, and graph topologies, we establish that the proposed Global Update Tracking with Quasi-Global momentum (\textit{QG-GUTm}) outperforms the current state-of-the-art decentralized learning algorithm on a spectrum of heterogeneous data.
\end{itemize}

\section{Background}

In this section, we provide the background on the decentralized setup with peer-to-peer connections.

The main goal of decentralized machine learning is to learn a global model using the knowledge extracted from the locally stored data samples across $n$ agents while maintaining privacy constraints. In particular, we solve the optimization problem of minimizing the global loss function $f(x)$ distributed across $n$ agents as given in \eqref{eq:1}. 
Note that $F_i$ is a local loss function (for example, cross-entropy loss) defined in terms of the data ($d_i$) sampled  from the local dataset $D_i$ at agent $i$.
\begin{equation}
\label{eq:1}
\begin{split}
    \min \limits_{x \in \mathbb{R}^d} f(x) &= \frac{1}{n}\sum_{i=1}^n f_i(x), \\
    \text{where} \hspace{2mm} f_i(x) &= \mathbb{E}_{d_i \sim D_i}[F_i(x; d_i)], \hspace{2mm} \text{for all } i.
\end{split}
\end{equation}
The optimization problem is typically solved by combining stochastic gradient descent \cite{sgd} with global consensus-based gossip averaging \cite{gossip}. 
The communication topology is modeled as a graph $G = ([N], E)$ with edges $\{i,j\} \in E$ if and only if agents $i$ and $j$ are connected by a communication link exchanging the messages directly. 
We represent $\mathcal{N}(i)$ as the neighbors of agent $i$ including itself. It is assumed that the graph $G$ is strongly connected with self-loops 
i.e., there is a path from every agent to every other agent. 
The adjacency matrix of the graph $G$ is referred to as a mixing matrix $W$ where $w_{ij}$ is the weight associated with the edge $\{i,j\}$. Note that, weight $0$ indicates the absence of a direct edge between the agents, and the elements of the Identity matrix are represented by $I_{ij}$.
Similar to the majority of previous works in decentralized learning, the mixing matrix is assumed to be doubly stochastic. 
Further, the initial models and all the hyperparameters are synchronized at the beginning of the training. The communication among the agents is assumed to be synchronous. 

Traditional decentralized algorithms such as DSGD \cite{d-psgd} assume the data across the agents to be Independent and Identically Distributed (IID). 
In DSGD, each agent $i$ maintains local parameters $x_i^t \in \mathbb{R}^d$ and updates them as follows.
\begin{equation}
\label{eq:dsgd}
\begin{split}
   \text{DSGD:} \hspace{2mm} x_i^{t+1} = \sum_{j \in \mathcal{N}(i)} w_{ij}(x_j^t - \eta g_j^t); \hspace{4mm}g_i^t=\nabla F_j(x_i^t, d_i^t).  
\end{split}
\end{equation}
We focus on a decentralized setup with non-IID/heterogeneous data. In particular, the heterogeneity in the data distribution comes in the form of skewed label partition similar to \cite{skewscout}. 
Decentralized learning with the DSGD algorithm on heterogeneous data distribution results in performance degradation due to huge variations in the local gradients across the agents. To tackle this, authors in \cite{qgm} propose a momentum-based optimization technique (QG-DSGDm) introducing Quasi-Global momentum as shown in \eqref{eq:qgm}.  
\begin{equation}
\label{eq:qgm}
\begin{split}
   \text{QG-DSGDm:} \hspace{2mm}x_i^{t+1} = \sum_{j \in \mathcal{N}(i)} w_{ij} [ x_j^t - \eta (g_j^t+\beta m_j^{t-1} )]; \hspace{2mm}
   m_i^t=\mu m_i^{t-1}+ (1-\mu)\frac{x_i^{t-1}-x_i^t}{\eta}.
\end{split}
\end{equation}
QG-DSGDm improves the performance of decentralized learning on heterogeneous data without any communication overhead and is used as a baseline for comparison in this work.

Gradient Tracking (GT) mechanisms \cite{dgt, mt} are also known to improve decentralized learning on heterogeneous data by reducing the variance between the local gradient and the averaged (global) gradient. To achieve this, the gradient tracking algorithm introduces a tracking variable $y_i^t$ that approximates the total gradient and is used to update the local parameters $x_i^t$ (refer to \eqref{eq:gt}). 
\begin{equation}
\label{eq:gt}
\begin{split}
  \text{GT:} \hspace{2mm} x_i^{t+1} = \sum_{j \in \mathcal{N}(i)} w_{ij} (x_j^t - \eta y_j^{t}); \hspace{4mm} y_i^t=\sum_{j \in \mathcal{N}(i)} w_{ij} y_j^{t-1} - g_i^{t-1} + g_i^t . 
\end{split}
\end{equation}
The update rule of tracking variable is such that it recursively adds a correction term \big($\sum_{j \in \mathcal{N}(i)} w_{ij} y_j^{t-1}-g_i^{t-1}$\big) to the local gradient $g_i^t$, pushing $y_i^t$ to be closer to the global gradients ($\frac{1}{n}\sum_{j=1}^ng_j^t$). This requires each agent $i$ to communicate two sets of parameters $x_i^t$ and $y_i^t$ with its neighbors. Thus, the gradient tracking algorithm improves the decentralized learning on non-IID data at the cost of $2\times$ communication overhead. 

\section{Global Update Tracking}
\label{sec:gut}
We present the \textit{Global Update Tracking (GUT)} algorithm for decentralized deep learning on non-IID data distribution. \textit{GUT} is a communication-free tracking mechanism that aims to mitigate the difficulties of decentralized training when the data distributed across the agents is heterogeneous. 

\begin{algorithm}[ht]
\textbf{Input:} Each agent $i \in [1,n]$ initializes model parameters $x_i^{0}$ and neighbors' copy $\hat{x}_j^{0}$, step size $\eta$, \textit{GUT} scaling factor $\mu$, mixing matrix $W=[w_{ij}]_{i,j \in [1,n]}$, $\mathcal{N}(i)$ represents neighbors of $i$ including itself, and note $\hat{x}_i^{t}=x_i^{t}$.\\

Each agent simultaneously implements the 
T\text{\scriptsize RAIN}( ) procedure\\
1.  \textbf{procedure} T\text{\scriptsize RAIN}( ) \\
2.  \hspace{4mm}\textbf{for} t = $0,1,\hdots,T-1$ \textbf{do}\\
3.  \hspace*{8mm}$d_i^{t} \sim D_i$\\
4.  \hspace*{8mm}$g_{i}^{t}=\nabla_x F_i(\sum_{j\in \mathcal{N}(i)}w_{ij}\hat{x}_j^{t}; d_i^t) $ \\
5.  \hspace*{8mm}$\delta_i^{t}= g_{i}^{t} - \frac{1}{\eta}\sum_{j\in \mathcal{N}(i)} w_{ij}(\hat{x}_j^t-x_i^t)$\\
6. \hspace*{8mm}$y_i^{t}= \delta_i^t + \mu \Big[\sum\limits_{j\in \mathcal{N}(i)}w_{ij}(y_j^{t-1}-\frac{1}{\eta}(\hat{x}_j^{t}-x_i^{t}))  - \delta_i^{t-1}\Big]$\\
7.  \hspace*{8mm}S\text{\scriptsize END}R\text{\scriptsize ECEIVE}($ y_i^t$) \\
8.  \hspace*{8mm}$x_i^{t+1}= x_i^{t} - \eta y_i^t$ \\
9.  \hspace*{8mm}$\hat{x}_j^{t+1}=\hat{x}_j^{t}-\eta y_j^{t} \hspace{2mm} \forall \hspace{2mm} j \in \mathcal{N}(i) \backslash i$\\
10. \hspace{4mm}\textbf{end}\\
11.  \textbf{return $\frac{1}{n}\sum_{i=1}^n x_i^{T}$}
\caption{Global Update Tracking (\textit{GUT})}
\label{alg:GUT}
\end{algorithm}

In order to attain the benefits of gradient tracking without communication overhead, we propose to apply the tracking mechanism with respect to the model updates $x_i^t-x_i^{t-1}$ instead of the gradients $g_i^t$. Firstly, to design a tracking mechanism without additional communication cost, each agent $i$ communicates model updates instead of model parameters to its neighbors. An agent $i$ stores a copy of its neighbor's parameters as $\hat{x}_j$ and updates it using the received model updates to retrieve the current version of the neighbor's parameters as shown in line-9 of Algorithm~\ref{alg:GUT}. A memory-efficient implementation of the algorithm (Algorithm~\ref{alg:GUT_memory} in Appendix~\ref{appendix:algo}) requires each agent to store $s_i = \sum_{j \in \mathcal{N}(i)}w_{ij} \hat{x}_j$ instead of storing each neighbor's copy separately requiring only $\mathcal{O}(1)$ additional memory \cite{choco1}.

Now, we define a variable $\delta_i^t$ on each agent $i$ that accumulates the local gradient update $g_i^t$ and the gossip averaging update $\sum_j (w_{ij}-I_{ij})\hat{x}_j^t$ as shown in line-5 of Algorithm~\ref{alg:GUT}. Note that we can recover the DSGD update defined in the \eqref{eq:dsgd} by using $\delta_i^t$ in the update rule i.e., $x_i^{t+1} = x_i^t - \eta \delta_i^t$. 
We then proceed to compute the tracking variable $y_i^t$, as described in line-6 of Algorithm~\ref{alg:GUT}, using the combined model update (local gradient part and gossip averaging part) reflected by $\delta_i^t$. 
The gossip averaging part of the update for each agent $i$ i.e., $ \sum_j w_{ij}(\hat{x}_j^t-x_i^t)$ is computed with respect to its own model weights.
To account for this in the computation of tracking variable $y_i^t$, the agents have to adjust the information received from the neighbors (i.e., $y_j^t$'s) to change the reference to itself. 
This is reflected as an additional term $\frac{1}{\eta}(\hat{x}_j^t-x_i^t)$ in the update rule given by line-6 of Algorithm~\ref{alg:GUT}. 
Further, we scale the correction term of the tracking variable by a factor $\mu$, a hyper-parameter, which is tuned to extract the maximum benefits of the proposed algorithm.

In summary, the update scheme of \textit{GUT} can be re-formulated in the following matrix form where $X=[x_1,\hdots,x_n]\in \mathbb{R}^{d\times n}$ are the model parameters and $G=[g_1,\hdots,g_n]\in \mathbb{R}^{d\times n}$ are stochastic gradients.
\begin{equation}
\label{eq:gut}
\begin{split}
   X^{t+1} &= X^t - \eta Y^t, \\
    Y^{t+1}& = G^{t+1}-\frac{1}{\eta}(W-I)X^{t+1}+\mu[WY^{t}- G^{t}-\frac{1}{\eta}(W-I)(X^{t+1}-X^{t})].
\end{split}
\end{equation}
Finally, we show that integrating the proposed \textit{GUT} algorithm with Quasi-Global Momentum improves the current state-of-the-art significantly without any communication overhead. The pseudo-code for the momentum version of our algorithm (\textit{QG-GUTm}) is presented in Appendix~\ref{appendix:algo}.

\section{Convergence Guarantees}

This section provides the convergence analysis for the proposed \textit{GUT} Algorithm. 
We assume that the following standard assumptions hold:

\begin{assumption}[Lipschitz Gradients]\label{assumption:lip}
Each function $f_i(x)$ is L-smooth i.e., $||\nabla f_i(y) - \nabla f_i(x)||\leq L ||y-x||$.
\end{assumption}

\begin{assumption}[Bounded Variance]\label{assumption:variance}
   The stochastic gradients are unbiased and their variance is assumed to be bounded.
\begin{equation}
\label{eq:inner_variance}
\mathbb{E}_{ d \sim D_i} || \nabla F_i(x; d) - \nabla f_i(x)||^2 \leq \sigma^2 \hspace{2mm} \forall i \in [1,n],
\end{equation}
\begin{equation}
\label{eq:outer_variance}
\frac{1}{n} \sum_{i=1}^n  || \nabla f_i(x) - \nabla \mathcal{F}(x)||^2 \leq \zeta^2 .
\end{equation}
\end{assumption}

\begin{assumption}[Doubly Stochastic Mixing Matrix]\label{assumption:weight-matrix}
    The mixing matrix $W$ is a real doubly stochastic matrix with $\lambda_1(W)=1$ and
\begin{equation}
\label{eq:eigen}
\max{\{|\lambda_2(W)|, |\lambda_N(W)|\}} \leq 1-\rho <1,
\end{equation}
where $\lambda_i(W)$ is the $i^{th}$ largest eigenvalue of W and $\rho$ is the spectral gap.
\end{assumption} 
The above assumptions are commonly used in most decentralized learning setups. Theorem~\ref{theorem_1} presents the convergence of the proposed \textit{GUT} algorithm and the proof is detailed in Appendix~\ref{appendix:convergence_rate_proofs}.

\begin{theorem}
\label{theorem_1}
(Convergence of \textit{GUT} algorithm) Given Assumptions \ref{assumption:lip}, \ref{assumption:variance}, and \ref{assumption:weight-matrix} let step size $\eta \leq \frac{\rho}{7L}$ and the scaling factor $\frac{\mu}{1-\mu} \leq \frac{\rho}{42}$.
For all $T \geq 1$, we have
\begin{equation}
    \begin{split}
        \frac{1}{T} \sum_{t=0}^{T-1} \mathbb{E}||\nabla f(\Bar{x}^t)||^2 \leq \frac{4}{\eta T} (f(\Bar{x}^0)-f^*) + \eta \frac{4L\sigma^2}{n} + \eta^2 \frac{1248 L^2}{\rho^2} (\zeta^2+\sigma^2(2-\mu)),
    \end{split}
\end{equation}
where $f(\Bar{x}^0)-f^*$ is the sub-optimality gap, $\Bar{x}$ is the average/consensus model parameters.
\end{theorem}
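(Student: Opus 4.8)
The plan is to follow the standard two-part template for decentralized SGD convergence: a descent inequality for the averaged iterate, together with a bound on the consensus (disagreement) error, glued by $L$-smoothness. Throughout I write $\bar{x}^t = \frac{1}{n}\sum_i x_i^t$, $\bar{y}^t = \frac{1}{n}\sum_i y_i^t$, and $\bar{g}^t = \frac{1}{n}\sum_i g_i^t$, where each $g_i^t$ is evaluated at the mixed point $\hat{X}_i^t = \sum_{j\in\mathcal{N}(i)} w_{ij}\hat{x}_j^t$.

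First I would exploit the doubly-stochastic structure of $W$ (Assumption~\ref{assumption:weight-matrix}) to analyze the averaged dynamics. Averaging the update in \eqref{eq:gut} annihilates every term of the form $(W-I)(\cdot)$, since the consensus direction satisfies $\mathbf{1}^\top(W-I)=0$, and it sends $WY^t$ to $\bar{y}^t$ because $\mathbf{1}^\top W = \mathbf{1}^\top$. The $Y$-recursion therefore collapses to $\bar{y}^{t+1} = \bar{g}^{t+1} + \mu(\bar{y}^t - \bar{g}^t)$; with the natural initialization $\bar{y}^0 = \bar{g}^0$ this telescopes to the key identity $\bar{y}^t = \bar{g}^t$ for all $t$. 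Consequently the averaged iterate obeys the clean recursion $\bar{x}^{t+1} = \bar{x}^t - \eta\,\bar{g}^t$, exactly as in centralized SGD but with gradients taken at the mixed points $\hat{X}_i^t$. I would then apply Assumption~\ref{assumption:lip} (Descent Lemma) along $\bar{x}^{t+1}=\bar{x}^t-\eta\bar{g}^t$, take expectation using the unbiasedness and variance bound \eqref{eq:inner_variance}, and split the cross term via $-\langle a,b\rangle = \tfrac12(\|a-b\|^2-\|a\|^2-\|b\|^2)$. This produces the target $-\tfrac{\eta}{2}\|\nabla f(\bar{x}^t)\|^2$ descent, a noise term of order $\eta^2 \frac{L\sigma^2}{n}$, and a residual controlled by $\frac{L^2}{n}\sum_i \mathbb{E}\|\bar{x}^t - \hat{X}_i^t\|^2$ --- the consensus error.

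The crux is bounding this consensus error. Because the tracking variable couples $X$ and $Y$, no single scalar recursion closes; instead I would set up a joint linear system for the two disagreement quantities $\Omega_X^t = \mathbb{E}\|X^t - \bar{X}^t\|_F^2$ and $\Omega_Y^t = \mathbb{E}\|Y^t - \bar{Y}^t\|_F^2$, with $\bar{X}^t,\bar{Y}^t$ the rank-one consensus matrices. Using the $\rho$-contraction of $W$, $L$-smoothness to relate gradient differences to $\Omega_X$, and the variance/heterogeneity bounds \eqref{eq:inner_variance}--\eqref{eq:outer_variance} (so $\zeta^2$ enters here, as the driving term of the recursion), each step contracts up to additive noise of order $\sigma^2+\zeta^2$. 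The delicate part is showing the resulting $2\times2$ transition matrix has spectral radius below $1$: the gossip term $\frac{1}{\eta}(W-I)X$ and the $\mu$-scaled correction feed disagreement back and forth between $\Omega_X$ and $\Omega_Y$, and it is precisely the constraints $\eta \le \rho/(7L)$ and $\mu/(1-\mu)\le\rho/42$ that supply enough contraction slack to keep the system stable and to sum $\sum_t \Omega_X^t$ to an $\mathcal{O}(\eta^2/\rho^2)$ bound.

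Finally I would substitute the summed consensus bound into the averaged descent inequality, telescope $f(\bar{x}^t)$ over $t=0,\dots,T-1$, and rescale by $\eta T/4$; the step-size conditions make the consensus-error coefficient small enough that the $-\tfrac{\eta}{2}\|\nabla f(\bar{x}^t)\|^2$ term survives, and carrying the constants through the contraction argument yields the $\eta^2\frac{1248L^2}{\rho^2}(\zeta^2+\sigma^2(2-\mu))$ term. I expect the main obstacle to be establishing the contraction of the coupled $(\Omega_X,\Omega_Y)$ recursion: controlling the cross-coupling constants tightly enough to certify spectral radius $<1$ under the stated $\eta,\mu$ bounds is the technical heart of the argument, whereas the descent and telescoping steps are essentially routine once the consensus bound is in hand.
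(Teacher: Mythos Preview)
Your plan is structurally sound and matches the paper's overall template: show $\bar y^t=\bar g^t$ (equivalently $\bar b^t=0$), run the descent lemma on $\bar x^{t+1}=\bar x^t-\eta\bar g^t$, couple it to a two-variable consensus recursion, and combine via a Lyapunov/potential argument. Where you diverge is in the choice of the second recursion variable. You propose to track $(\Omega_X^t,\Omega_Y^t)$ with $\Omega_Y^t=\mathbb E\|Y^t-\bar Y^t\|_F^2$. The paper instead first rewrites the scheme in the bias-corrected form $X^{t+1}=WX^t-\eta(G^t+\mu B^t)$ (equation~\eqref{eq:gut_b}) and tracks $(\,\mathbb E\|X^t-\bar X^t\|_F^2,\ \mathbb E\|B^t\|_F^2\,)$ through two separate lemmas, then combines them in the potential $\Phi^t$ of~\eqref{eq:phi}. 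The bias reformulation buys two concrete simplifications: (i) the $X$-recursion carries an explicit $W$-contraction, whereas in your parametrization $X^{t+1}=X^t-\eta Y^t$ has no visible gossip step and the contraction would have to be extracted by unpacking $Y^t$ (which effectively reproduces the $B^t$ form); (ii) the $B$-recursion contracts with factor essentially $\mu$, so the stability of the $2\times2$ system is immediate once $\mu/(1-\mu)\le\rho/42$, while in the $(\Omega_X,\Omega_Y)$ picture the $\frac{1}{\eta}(W-I)X$ terms sitting inside $Y$ produce $1/\eta^2$-scaled cross terms that must be carefully balanced. Your route should still close, but expect the spectral-radius verification to be noticeably messier than in the paper; if you find the constants not cooperating, switching to the bias variable $B^t$ is the cleanest fix.
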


The result of the Theorem~\ref{theorem_1} shows that the averaged gradient of the averaged model is upper-bounded by the sub-optimality gap (the difference between the initial objective function value and the optimal value), the sampling variance ($\sigma$), and gradient variations across the agents representing data heterogeneity ($\zeta$). Further, we present a corollary to show the convergence rate of \textit{GUT} in terms of the number of iterations.

\begin{corollary}
\label{corol}
Suppose that the step size satisfies $\eta=\mathcal{O}\Big(\sqrt{\frac{n}{T}}\Big)$ 
For a sufficiently large $T$ we have, 
\begin{align}
        \frac{1}{T} \sum_{t=0}^{T-1} \mathbb{E}||\nabla f(\Bar{x}^t)||^2 \leq
         \mathcal{O}\Bigg(\frac{1}{\sqrt{nT}}+\frac{1}{T}
        \Bigg).
\end{align}
\end{corollary}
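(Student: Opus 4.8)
The plan is to derive the stated rate by directly substituting the prescribed step size into the non-asymptotic bound of Theorem~\ref{theorem_1} and tracking how each of its three terms scales with $T$ and $n$. Concretely, I would take $\eta = c\sqrt{n/T}$ for an absolute constant $c$ (the intended tuned choice behind $\eta=\mathcal{O}(\sqrt{n/T})$) and treat $L$, $\rho$, $\zeta$, $\sigma$, and $\mu$ as constants absorbed into the $\mathcal{O}(\cdot)$ notation.

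First I would check that this choice is admissible, i.e. that the hypotheses $\eta \leq \rho/(7L)$ and $\mu/(1-\mu)\leq \rho/42$ under which Theorem~\ref{theorem_1} was proved still hold. The condition on $\mu$ involves only $\mu$ and $\rho$ and is inherited unchanged. The condition on $\eta$ requires $c\sqrt{n/T}\leq \rho/(7L)$, which is precisely what ``for a sufficiently large $T$'' secures, since $\sqrt{n/T}\to 0$ as $T\to\infty$; this is essentially the only place where that qualifier is genuinely needed to license the substitution.

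Next I would substitute and simplify term by term. For the sub-optimality term, $\eta T = c\sqrt{nT}$, so $\frac{4}{\eta T}\big(f(\bar{x}^0)-f^*\big)=\mathcal{O}(1/\sqrt{nT})$. For the sampling-variance term, $\eta/n = c/\sqrt{nT}$, so $\eta\,\frac{4L\sigma^2}{n}=\mathcal{O}(1/\sqrt{nT})$. These two produce the leading $1/\sqrt{nT}$ contribution. For the heterogeneity/consensus term, $\eta^2 = c^2\, n/T$, so $\eta^2\,\frac{1248L^2}{\rho^2}\big(\zeta^2+\sigma^2(2-\mu)\big)=\mathcal{O}(1/T)$ once $n$ (alongside $L,\rho,\zeta,\sigma$) is held fixed. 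Summing the three contributions gives the claimed $\mathcal{O}(1/\sqrt{nT}+1/T)$.

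The derivation is essentially mechanical, and the only point deserving care is the bookkeeping of the $n$-dependence, which I expect to be the main (if mild) obstacle. The third term in fact scales like $n/T$, so writing it as $1/T$ implicitly fixes $n$; the cleaner asymptotic reading is that once $T\gtrsim n^3$ the $1/\sqrt{nT}$ term dominates $n/T$, so the effective rate is $\mathcal{O}(1/\sqrt{nT})$ and exhibits the desired linear speedup in the number of agents. In the writeup I would state explicitly which quantities are treated as constants so that the two displayed terms are unambiguous.
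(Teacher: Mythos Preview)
Your proposal is correct and follows essentially the same route as the paper: substitute $\eta=\mathcal{O}(\sqrt{n/T})$ into the three terms of Theorem~\ref{theorem_1}, verify the step-size constraint via ``sufficiently large $T$,'' and obtain $\mathcal{O}(1/\sqrt{nT})$ from the first two terms and $\mathcal{O}(n/T)$ from the third before absorbing $n$ as a constant. Your explicit remark that the third term is really $n/T$ and that $T\gtrsim n^3$ is needed for the linear-speedup reading is in fact slightly more careful than the paper, which simply states it is ``omitting the constant $n$ in this context of higher order terms.''
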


Corollary~\ref{corol} indicates that the \textit{GUT} algorithm achieves linear speedup with a convergence rate of $\mathcal{O}(\frac{1}{\sqrt{nT}})$ when $T$ is sufficiently large and is independent of communication topology. In another words, the communication complexity to find an $\epsilon$-first order solution, i.e., $\mathbb{E}\|\nabla f(\bar{x})\|^2\leq \epsilon$ is $\mathcal{O}(\frac{\sigma^2}{n\epsilon^2})$. This convergence rate is similar to the well-known best result for decentralized SGD algorithms \cite{d-psgd} in the literature.

\section{Experiments}
In this section, we analyze the performance of the proposed \textit{GUT} and \textit{QG-GUTm} techniques and compare them with the baseline DSGD \cite{d-psgd} and the current state-of-the-art QG-DSGDm \cite{qgm} respectively. 
\footnote{Our PyTorch code is available at \href{https://github.com/aparna-aketi/global_update_tracking}{github.com/aparna-aketi/global\_update\_tracking}}

\subsection{Experimental Setup}
The efficiency of the proposed method is demonstrated through our experiments on a diverse set of datasets, model architectures, graph topologies, and graph sizes. We present the analysis on -- 
(a) Datasets: CIFAR-10, CIFAR-100, Fashion MNIST, and Imagenette.
(b) Model architectures:  VGG-11, ResNet-20, LeNet-5 and, MobileNet-V2. All the models use Evonorm \cite{evonorm} as the activation-normalization layer as it is shown to be better suited for decentralized learning on heterogeneous data.
(c) Graph topologies: Ring graph with 2 peers per agent, Dyck graph with 3 peers per agent, and Torus graph with 4 peers per agent (refer Figure~\ref{fig:topologies}).
(d) Number of agents: 16-40 agents.
We use the Dirichlet distribution to generate disjoint non-IID data across the agents. The created data partition across the agents is fixed, non-overlapping, and never shuffled across agents during the training. The degree of
heterogeneity is regulated by the value of $\alpha$ -- the smaller the $\alpha$ the larger the non-IIDness across the agents. 
We report the test accuracy of the consensus model averaged over three randomly chosen seeds.
The details of the decentralized setup and hyperparameters for all the experiments are presented in Appendix~\ref{apx:dl}.

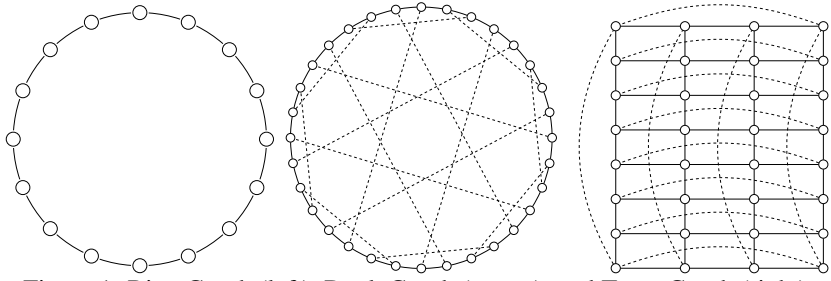
\begin{figure}[t]
	\hfill
	\resizebox{0.8\linewidth}{!}{
		\resizebox{.28\linewidth}{!}{
			\begin{tikzpicture}[scale=1]
			\def \n {16}
			\def \radius {3.0cm}
			\def \margin {-4} %
			
			\foreach \s in {1,...,\n}
			{
				\node[draw, circle] at ({-360/\n * (\s - 1) + 90}:\radius) {};
				\draw[-, >=latex] ({-360/\n * (\s - 1)+\margin + 90}:\radius)
				arc ({-360/\n * (\s - 1)+\margin + 90}:{-360/\n * (\s)-\margin + 90}:\radius);
			}
			\end{tikzpicture}
   
		}
		\hfill
		\resizebox{.28\linewidth}{!}{
			\begin{tikzpicture}[scale=1]
			\def \n {32}
			\def \radius {5.0cm}
			\def \margin {-2} %
			
			\foreach \s in {1,...,\n}
			{
				\node[draw, circle] (\s) at ({-360/\n * (\s - 1) + 90}:\radius) {};
				\draw[-, >=latex] ({-360/\n * (\s - 1)+\margin + 90}:\radius)
				arc ({-360/\n * (\s - 1)+\margin + 90}:{-360/\n * (\s)-\margin + 90}:\radius);
			}
            \path [dashed, -] (1) edge node[left] {} (20);
            \path [dashed, -] (4) edge node[left] {} (17);
            \path [dashed, -] (9) edge node[left] {} (28);
            \path [dashed, -] (12) edge node[left] {} (25);
            \path [dashed, -] (5) edge node[left] {} (24);
            \path [dashed, -] (8) edge node[left] {} (21);
            \path [dashed, -] (13) edge node[left] {} (32);
            \path [dashed, -] (16) edge node[left] {} (29);

            \path [dashed, -] (2) edge node[left] {} (7);
            \path [dashed, -] (6) edge node[left] {} (11);
            \path [dashed, -] (10) edge node[left] {} (15);
            \path [dashed, -] (14) edge node[left] {} (19);
            \path [dashed, -] (18) edge node[left] {} (23);
            \path [dashed, -] (22) edge node[left] {} (27);
            \path [dashed, -] (26) edge node[left] {} (31);
            \path [dashed, -] (30) edge node[left] {} (3);
			\end{tikzpicture}
		}
		\hfill
		\resizebox{.27\linewidth}{!}{
			\begin{tikzpicture}[scale=2.5]
            \node[shape=circle,draw=black] (1) at (-1,-1.5) {};
            \node[shape=circle,draw=black] (2) at (0,-1.5) {};
			\node[shape=circle,draw=black] (3) at (1, -1.5) {};
			\node[shape=circle,draw=black] (4) at (2,-1.5) {};
            \node[shape=circle,draw=black] (5) at (-1,-1.0) {};
            \node[shape=circle,draw=black] (6) at (0,-1.0) {};
			\node[shape=circle,draw=black] (7) at (1, -1.0) {};
			\node[shape=circle,draw=black] (8) at (2,-1.0) {};
            \node[shape=circle,draw=black] (9) at (-1,-0.5) {};
            \node[shape=circle,draw=black] (10) at (0,-0.5) {};
			\node[shape=circle,draw=black] (11) at (1, -0.5) {};
			\node[shape=circle,draw=black] (12) at (2,-0.5) {};
			\node[shape=circle,draw=black] (13) at (-1,0) {};
			\node[shape=circle,draw=black] (14) at (0,0) {};
			\node[shape=circle,draw=black] (15) at (1, 0) {};
			\node[shape=circle,draw=black] (16) at (2,0) {};

            \node[shape=circle,draw=black] (17) at (-1,0.5) {};
			\node[shape=circle,draw=black] (18) at (0,0.5) {};
			\node[shape=circle,draw=black] (19) at (1,0.5) {};
			\node[shape=circle,draw=black] (20) at (2,0.5) {};
            \node[shape=circle,draw=black] (21) at (-1,1) {};
			\node[shape=circle,draw=black] (22) at (0,1) {};
			\node[shape=circle,draw=black] (23) at (1,1) {};
			\node[shape=circle,draw=black] (24) at (2,1) {};
			\node[shape=circle,draw=black] (25) at (-1,1.5) {};
			\node[shape=circle,draw=black] (26) at (0,1.5) {};
			\node[shape=circle,draw=black] (27) at (1, 1.5) {};
			\node[shape=circle,draw=black] (28) at (2,1.5) {};
			\node[shape=circle,draw=black] (29) at (-1,2) {};
            \node[shape=circle,draw=black] (30) at (0,2) {};
			\node[shape=circle,draw=black] (31) at (1, 2) {};
			\node[shape=circle,draw=black] (32) at (2,2) {};
            
			every edge/.style={draw=black}]
            \path [dashed,-] (1) edge[bend left=30] node[left] {} (29);
            \path [dashed,-] (2) edge[bend left=30] node[left] {} (30);
             \path [dashed,-] (3) edge[bend left=30] node[left] {} (31);
             \path [dashed,-] (4) edge[bend left=30] node[left] {} (32);
            
			\path [-] (1) edge node[left] {} (2);
			\path [-] (2) edge node[left] {} (3);
            \path [-] (3) edge node[left] {} (4);
            \path [-] (1) edge node[left] {} (5);
            \path [-] (2) edge node[left] {} (6);
            \path [-] (3) edge node[left] {} (7);
            \path [-] (4) edge node[left] {} (8);
			\path [-] (5) edge node[left] {} (6);
			\path [-] (6) edge node[left] {} (7);
            \path [-] (7) edge node[left] {} (8);
			\path [-] (5) edge node[left] {} (9);
            \path [-] (6) edge node[left] {} (10);
			\path [-] (7) edge node[left] {} (11);
			\path [-] (8) edge node[left] {} (12);
			\path [-] (9) edge node[left] {} (10);
			\path [-] (10) edge node[left] {} (11);
            \path [-] (11) edge node[left] {} (12);
            \path [-] (9) edge node[left] {} (13);
            \path [-] (10) edge node[left] {} (14);
			\path [-] (11) edge node[left] {} (15);
			\path [-] (12) edge node[left] {} (16);
            \path [-] (13) edge node[left] {} (14);
			\path [-] (14) edge node[left] {} (15);
            \path [-] (15) edge node[left] {} (16);
            \path [-] (13) edge node[left] {} (17);
			\path [-] (14) edge node[left] {} (18);
            \path [-] (15) edge node[left] {} (19);
            \path [-] (16) edge node[left] {} (20);
            \path [-] (17) edge node[left] {} (18);
            \path [-] (18) edge node[left] {} (19);
			\path [-] (19) edge node[left] {} (20);
            \path [-] (17) edge node[left] {} (21);
            \path [-] (18) edge node[left] {} (22);
			\path [-] (19) edge node[left] {} (23);
            \path [-] (20) edge node[left] {} (24);
            \path [-] (21) edge node[left] {} (22);
            \path [-] (22) edge node[left] {} (23);
			\path [-] (23) edge node[left] {} (24);
            \path [-] (21) edge node[left] {} (25);
            \path [-] (22) edge node[left] {} (26);
			\path [-] (23) edge node[left] {} (27);
            \path [-] (24) edge node[left] {} (28);
            \path [-] (25) edge node[left] {} (26);
            \path [-] (26) edge node[left] {} (27);
			\path [-] (27) edge node[left] {} (28);
            \path [-] (25) edge node[left] {} (29);
            \path [-] (26) edge node[left] {} (30);
			\path [-] (27) edge node[left] {} (31);
            \path [-] (28) edge node[left] {} (32);
            \path [-] (29) edge node[left] {} (30);
			\path [-] (30) edge node[left] {} (31);
            \path [-] (31) edge node[left] {} (32);
            \path [dashed,-] (4) edge[bend right=20] node[left] {} (1);
            \path [dashed,-] (8) edge[bend right=20] node[left] {} (5);
            \path [dashed,-] (12) edge[bend right=20] node[left] {} (9);
            \path [dashed,-] (16) edge[bend right=20] node[left] {} (13);
            \path [dashed,-] (20) edge[bend right=20] node[left] {} (17);
            \path [dashed,-] (24) edge[bend right=20] node[left] {} (21);
            \path [dashed,-] (28) edge[bend right=20] node[left] {} (25);
            \path [dashed,-] (32) edge[bend right=20] node[left] {} (29);
			\end{tikzpicture}
		}
	}
	\hfill\null
	\vspace{-3mm}
	\caption{Ring Graph (left), Dyck Graph (center), and Torus Graph (right). }\label{fig:topologies}
\end{figure}

\subsection{Average Consensus Task}
We first consider an average consensus task that is isolated from the learning through stochastic gradient descent. Here the aim is that all the agents should reach a consensus which is the average value of the initial information each agent holds. The following equations show the simplified version of \textit{GUT} \eqref{eq:ac_gut} and \textit{QG-GUTm} \eqref{eq:ac_qggut} after removing the gradient update part.
\begin{equation}
\label{eq:ac_gut}
\begin{split}
   X^{t+1} &= X^t+Y^t; \hspace{2mm} Y^t = (W-I)X^t + \mu [WY^{t-1}-(W-I)(X^{t-1}-X^t)],\\
\end{split}
\end{equation}
\begin{equation}
\label{eq:ac_qggut}
\begin{split}
   X^{t+1} &= X^t+ \hat{M}^t; \hspace{4mm} M^t = \beta M^{t-1}+(1-\beta)(X^{t}-X^{t-1})\\
   \hat{M}^t = &\beta M^t+(1-\beta)[(W-I)X^t + \mu (W\hat{M}^{t-1} -(W-I)(X^{t-1}-X^t))].\\
\end{split}
\end{equation}
 Note that setting the hyper-parameter $\mu$ as $0$ in the \eqref{eq:ac_gut}, \ref{eq:ac_qggut} gives simple gossip\cite{gossip} and quasi-global gossip \cite{qgm} respectively and all the agents communicate $X^t-X^{t-1}$ at iteration $t$ with their neighbors.
 
Figure.\ref{fig:ce} shows the average consensus error i.e., $\frac{1}{n}||X^t-\Bar{X}||_F^2$ over time for the average consensus task on the Ring topology with respect to various algorithms. We observe that the gossip averaging with \textit{GUT} converges faster than simple gossip averaging. Figure.\ref{fig:ce_256} illustrates that for graphs with a smaller spectral gap (which corresponds to more agents), the proposed \textit{QG-GUTm} can converge faster than quasi-global gossip (gossip with QGM) resulting in better decentralized optimization. 

\begin{figure}[ht]
\centering     
\subfigure[64 agents]
{
\includegraphics[width=45mm]{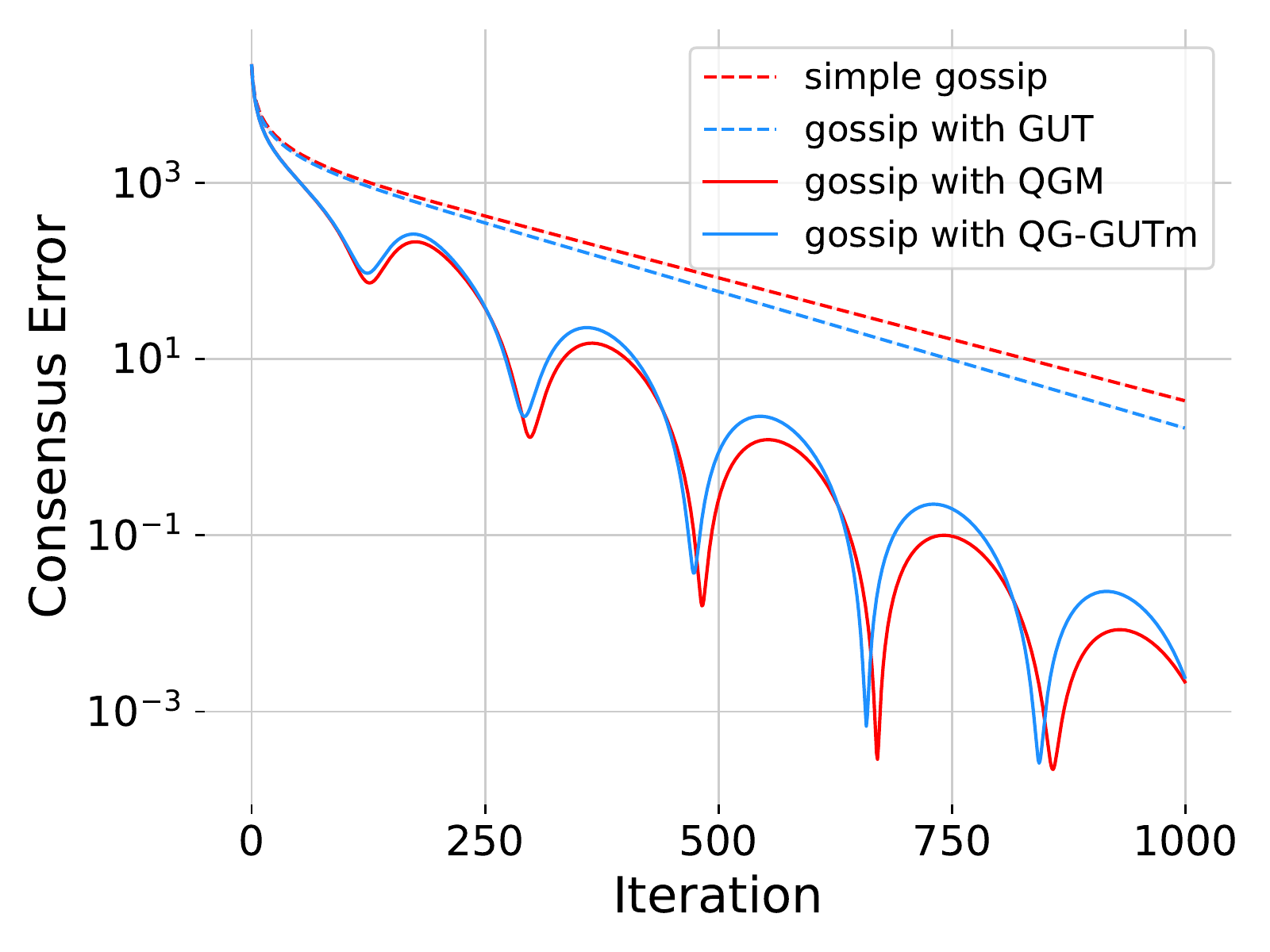}}
\subfigure[128 agents]
{
\includegraphics[width=45mm]{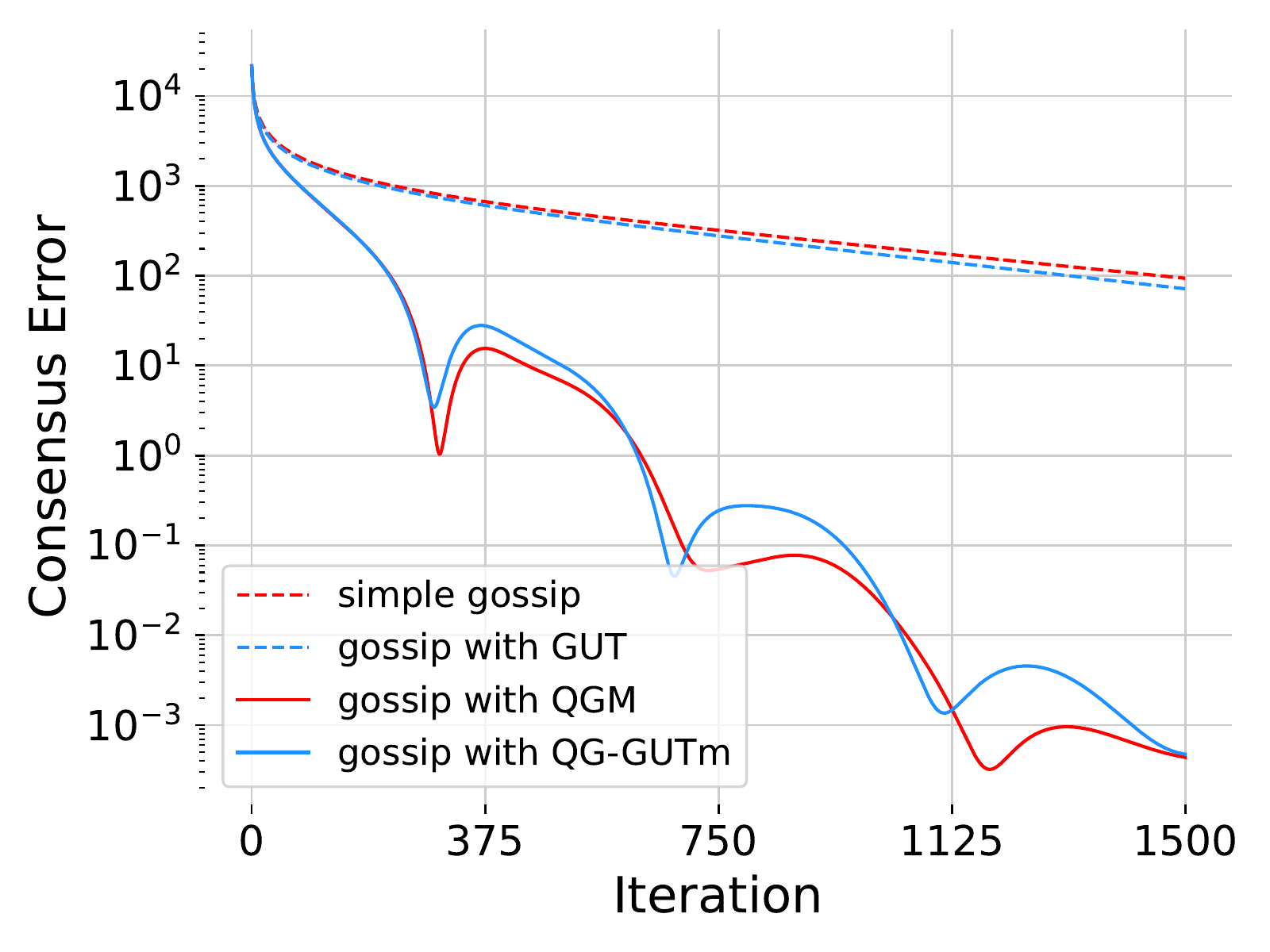}}
\subfigure[256 agents]
{\label{fig:ce_256}
\includegraphics[width=45mm]{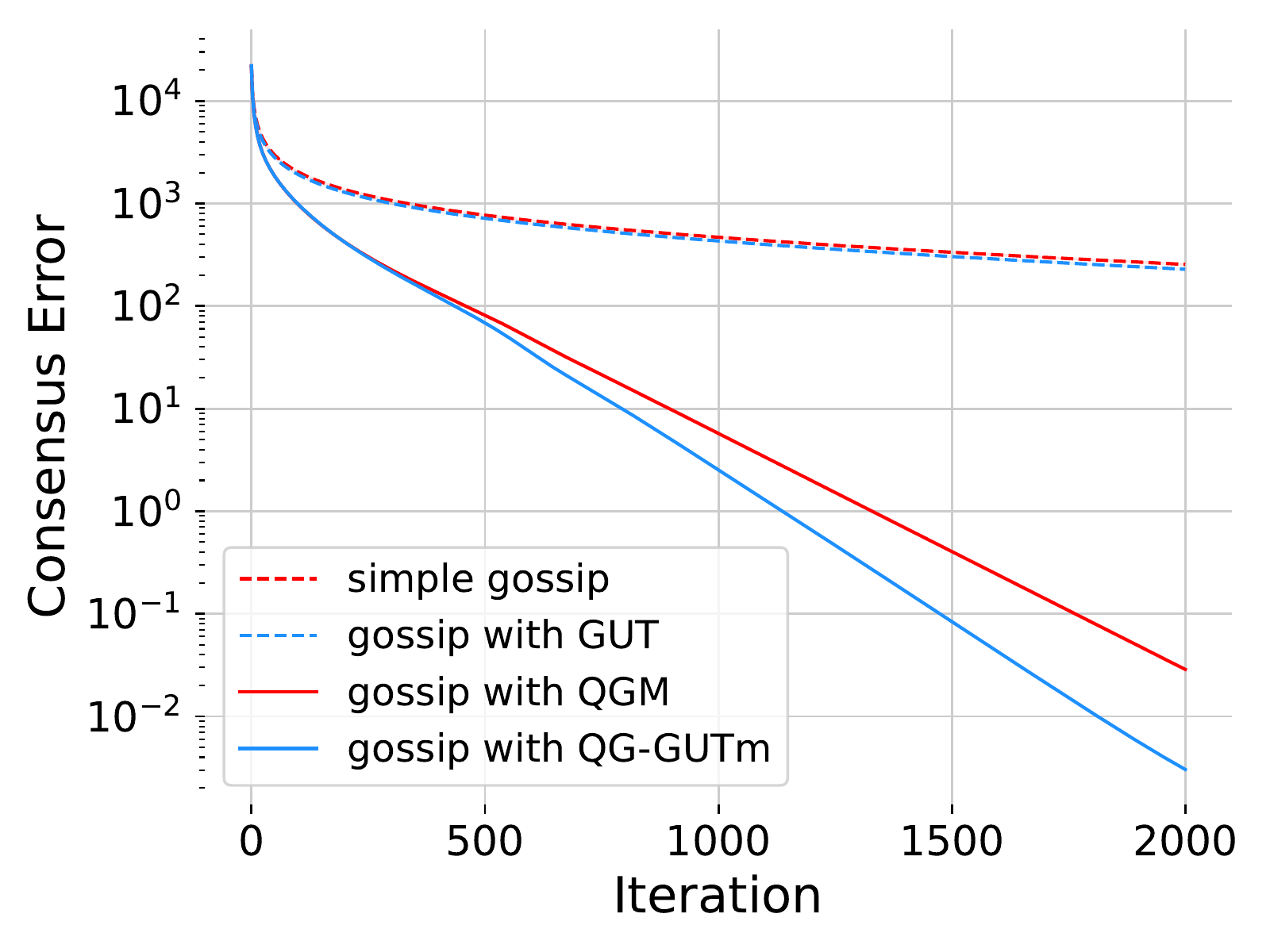}}
\caption{Decentralized average consensus problem on an undirected ring topology}
\label{fig:ce}
\end{figure}

\subsection{Decentralized Deep Learning Results}
We evaluate the efficiency of \textit{GUT} and its quasi-global momentum version \textit{QG-GUTm} with the help of an exhaustive set of experiments. We compare \textit{GUT} with DSGD and \textit{QG-GUTm} with QG-DSGDm and show that the proposed method outperforms the current state-of-the-art. Table.~\ref{tab:cf10} shows the average test accuracy for training different model architectures (ResNet-20 and VGG-11) on the CIFAR-10 dataset with varying degrees of non-IIDness over ring topology of 16 and 32 agents. 
We observe that \textit{GUT} consistently outperforms DSGD for all models, graph sizes, and degree of heterogeneity with a significant performance gain varying from $1-18\%$. The quasi-global momentum version of our algorithm, \textit{QG-GUTm}, beats QG-DSGDm with $1-3.5\%$ improvement in the case of the CIFAR-10 dataset partitioned with a higher degree of heterogeneity ($\alpha=0.1,0.01$). 

\begin{table}[t]
\caption{Average test accuracy of different decentralized algorithms evaluated on CIFAR-10, distributed with different degrees of heterogeneity (non-IID) for various models over ring topologies. The results are averaged over three seeds where std is indicated. We also include the results of the IID baseline as DSGDm (IID) where the local data is randomly partitioned independent of $\alpha$.}
\label{tab:cf10}
\small
\begin{center}
\resizebox{1.0\columnwidth}{!}{
\begin{tabular*}{\textwidth}{@{\extracolsep{\fill}}*{5}{c}}
\hline
\multirow{ 2}{*}{Agents ($n$)} &\multirow{ 2}{*}{Method}& \multicolumn{3}{c}{ResNet-20} \\
\cline{3-5}  
& & $\alpha=1$ & $\alpha=0.1$ & $\alpha=0.01$\\
 \hline
\multirow{5}{*}{$16$} & DSGDm (IID) & \multicolumn{3}{c}{$89.75 \pm 0.29$}  \\
 & DSGD &  $84.17 \pm 0.32$ & $72.21\pm 2.37 $ & $54.66 \pm 4.74$ \\
 & \textit{GUT (ours)}& $84.72 \pm 0.20 $ & $81.86 \pm 1.99$ & $70.16 \pm 4.94 $ \\
 & QG-DSGDm &$ \mathbf{88.23} \pm 0.51$ &$ 84.21\pm 2.12$ & $ 79.85 \pm 2.11$\\
 & \textit{QG-GUTm (ours)} & $ 88.22 \pm 0.36$ & $\mathbf{86.44} \pm 0.36$ & $\mathbf{81.04} \pm 1.66$ \\
 \hline
 \multirow{5}{*}{$32$} & DSGDm (IID) & \multicolumn{3}{c}{$ 88.52 \pm 0.23$} \\
 & DSGD &  $ 78.25 \pm 0.42  $ & $ 62.97 \pm 1.90$ & $42.58 \pm 1.84 $\\
 & \textit{GUT (ours)} & $ 79.24 \pm 0.33 $ & $ 76.07 \pm 0.23$ & $60.72 \pm 1.03$\\
 & QG-DSGDm & $87.15 \pm 0.33$ & $83.50 \pm 1.04$ & $69.99 \pm 0.60$\\
 & \textit{QG-GUTm (ours)}  & $\mathbf{87.48} \pm 0.33$ & $\mathbf{84.94} \pm 0.60$ & $\mathbf{72.04} \pm 3.18$\\
 \hline
 \hline
\multirow{ 2}{*}{Agents ($n$)} &\multirow{ 2}{*}{Method} & \multicolumn{3}{c}{VGG-11} \\
 \cline{3-5} 
& & $\alpha=1$ & $\alpha=0.1$ & $\alpha=0.01$ \\
 \hline
\multirow{5}{*}{$16$} & DSGDm (IID) & \multicolumn{3}{c}{$85.76 \pm 0.28$}  \\
 & DSGD &  $ 81.78 \pm 0.29$ &  $ 76.20 \pm 0.81$ & $ 68.93 \pm 1.23$ \\
 & \textit{GUT (ours)}  &$ 82.12 \pm 0.09$ & $ 81.24 \pm 0.95$  & $ 76.62 \pm 1.37$  \\
 & QG-DSGDm & $ 84.23 \pm 0.47$ & $ 81.70 \pm 0.79$ & $ 77.08 \pm 3.19$ \\
 & \textit{QG-GUTm (ours)} & $\mathbf{84.46} \pm 0.33$ & $\mathbf{83.05} \pm 0.48$ & $\mathbf{78.32} \pm 1.03$  \\
 \hline
\multirow{5}{*}{$32$} & DSGDm (IID) &  \multicolumn{3}{c}{$84.75 \pm 0.30$} \\
 & DSGD &  $ 79.75 \pm 0.56$ & $ 73.37 \pm 1.02$ &$ 59.93 \pm 1.60$ \\
 & \textit{GUT (ours)}  & $ 80.37 \pm 0.33$ & $ 79.55 \pm 1.00$ &  $ 73.59 \pm 1.26$\\
 & QG-DSGDm  & $ 83.67 \pm 0.28$ & $ 80.82\pm 0.19$ & $ 74.25 \pm 2.02$\\
 & \textit{QG-GUTm (ours)} & $\mathbf{84.32} \pm 0.11$ & $\mathbf{83.39} \pm 0.38$ & $\mathbf{77.41} \pm 3.44$\\
 \hline
\end{tabular*}
}
\end{center}
\end{table}

\begin{table}[t]
\caption{Average test accuracy of different decentralized algorithms evaluated on CIFAR-10 dataset trained on ResNet-20 over various graph topologies}
\label{tab:topologies}
\small
\begin{center}
\begin{tabular}{cccccc}
\hline
\multirow{ 2}{*}{Method}&\multicolumn{2}{c}{Dyck Graph (32 agents)} & &\multicolumn{2}{c}{Torus (32 agents)}\\
\cline{2-3} \cline{5-6}
 &$\alpha=0.1$   &$\alpha=0.01$ &&$\alpha=0.1$  &$\alpha=0.01$\\
\hline
QG-DSGDm & $ 86.49 \pm 0.81 $ & $ 81.32 \pm 1.50 $ & & $ 86.88 \pm 0.30 $ &$ 85.20 \pm 0.56 $ \\
\textit{QG-GUTm (ours)} & $ \mathbf{86.93} \pm 0.53 $ & $ \mathbf{84.80} \pm 0.47 $ & &$ \mathbf{87.75} \pm 0.42 $ & $ \mathbf{86.20} \pm 0.82 $\\
\hline
\end{tabular}
\end{center}
\end{table}

We present the experimental results on various graph topologies and datasets to demonstrate the scalability and generalizability of \textit{QG-GUTm}.
We train the CIFAR-10 dataset on ResNet-20 over the Dyck graph and Torus graph to exemplify the impact of connectivity on the proposed technique. As shown in Table.~\ref{tab:topologies}, we obtain  $0.5-3.5\%$ performance gains with varying connectivity (or spectral gap).
Further, we evaluate \textit{QG-GUTm} on various image datasets such as Fashion MNIST, and Imagenette and on challenging datasets such as CIFAR-100. Table.~\ref{tab:datasets} shows that \textit{QG-GUTm} outperforms QG-DSGDm by $0.2-6.2\%$ across various datasets. 
Therefore, in a decentralized deep learning setup, the proposed \textit{GUT} and \textit{QG-GUTm} algorithms are more robust to heterogeneity in the data distribution and can outperform all the comparison methods with an average improvement of $2\%$.

\begin{table}[t]
\caption{Average test accuracy of different decentralized algorithms evaluated on various datasets, distributed with different degrees of heterogeneity over 16 agents ring topology
}
\label{tab:datasets}
\small
\begin{center}
\resizebox{1.0\columnwidth}{!}{
\begin{tabular}{ccccccc}
\hline
 \multirow{2}{*}{Method} & \multicolumn{2}{c}{Fashion MNIST (LeNet-5)} & \multicolumn{2}{c}{CIFAR-100 (ResNet-20)} & \multicolumn{2}{c}{Imagenette (MobileNet-V2)}\\
 \cline{2-7}
 & $\alpha=0.1$ & $\alpha=0.01$ & $\alpha=0.1$ & $\alpha=0.01$ & $\alpha=0.1$ & $\alpha=0.01$\\
\hline
 QG-DSGDm & $ 89.94 \pm 0.44 $ & $ 83.43 \pm 0.94 $ & $ 53.19 \pm 1.68 $ & $ 44.17 \pm 3.64 $ & $ 63.60 \pm 4.50 $ & $ 39.49 \pm  4.57$\\
 \textit{QG-GUTm} & $ \mathbf{90.11} \pm 0.02 $ & $ \mathbf{84.60} \pm 1.00 $ & $ \mathbf{53.40} \pm 1.23 $ & $ \mathbf{50.45} \pm 1.30 $ & $ \mathbf{66.52} \pm 3.68 $ & $ \mathbf{43.85} \pm 8.24 $\\
\hline
\end{tabular}
}
\end{center}
\end{table}

\subsection{Ablation Study}

First, we analyze different ways of utilizing or tracking model update information as shown in Table.~\ref{tab:split}. We present two different update rules apart from \textit{GUT} and \textit{DSGD}\cite{d-psgd} and also compare them with gradient tracking \cite{dgt}. Rule-a applies the proposed tracking mechanism on model updates but does not change the reference of tracking variable $y_j$ received from the neighbors to itself (refer sec.~\ref{sec:gut} for details on changing the reference). In the case of Rule-b, each agent computes the difference between the averaged neighborhood model update ($W(X^t-X^{t-1})$) along with its own model update ($X^t-X^{t-1}$) and adds the difference between the two as a bias correction. Table.~\ref{tab:split} shows that such naive ways of tracking or bias correction update rules (rule-a,b) do not improve the performance of decentralized learning on heterogeneous data. This confirms our findings that the \textit{GUT} technique is an effective and provable way to track the consensus model and can outperform the gradient tracking mechanism without any communication overhead.  

\begin{table}[t]
\caption{Analyzing the different variations of 
model updates. Evaluating test accuracy on CIFAR-10 dataset trained on ResNet-20 over a 16 agent ring topology with $\alpha=0.1$}
\label{tab:split}
\small
\begin{center}
\resizebox{1.0\columnwidth}{!}{
\begin{tabular}{llcc}
\hline
\multirow{ 2}{*}{Method} & Update & Communication & Test\\
       & Rule   & Parameters (Cost)     & accuracy\\
\hline
\multirow{2}{*}{DSGD} & $\scriptstyle X^{t+1} = X^t-\eta Y^t$ & \multirow{2}{*}{$\scriptstyle X^t \hspace{1mm}(1\times)$}&\multirow{2}{*}{$72.21\pm 2.37 $}\\
& $\scriptstyle Y^{t} \hspace{3.2mm}= G^t-\frac{1}{\eta}(W-I)X^t$ & & \\
\hline
\multirow{2}{*}{Rule-a} & $\scriptstyle X^{t+1} = X^t-\eta Y^t$ & \multirow{2}{*}{$\scriptstyle Y^t \hspace{1mm}(1\times)$}& \multirow{2}{*}{$ 72.78 \pm 0.80 $}\\
& $\scriptstyle Y^{t}\hspace{3.2mm} = G^t-\frac{1}{\eta}(W-I)X^t+\mu[WY^{t-1}- (G^{t-1}-\frac{1}{\eta}(W-I)X^{t-1})] $& & \\
\hline
\multirow{2}{*}{Rule-b} & $\scriptstyle X^{t+1} = X^t-\eta Y^t$ & \multirow{2}{*}{$\scriptstyle Y^t \hspace{1mm}(1\times)$}& \multirow{2}{*}{$ 72.62 \pm 1.16$}\\
& $\scriptstyle Y^{t}\hspace{3.2mm} = G^t-\frac{1}{\eta}(W-I)X^t+\mu[-\frac{1}{\eta}(W-I)(X^t-X^{t-1})] $& & \\
\hline
\multirow{2}{*}{GUT} & $\scriptstyle X^{t+1} = X^t-\eta Y^t$ & \multirow{2}{*}{$\scriptstyle Y^t \hspace{1mm}(1\times)$}& \multirow{2}{*}{$\mathbf{81.86} \pm 1.99 $}\\
& $\scriptstyle Y^{t}\hspace{3.2mm} = G^t-\frac{1}{\eta}(W-I)X^t+\mu[WY^{t-1}- G^{t-1}-\frac{1}{\eta}(W-I)(X^t-X^{t-1})] $& & \\
\hline
Gradient & $\scriptstyle X^{t+1} = X^t-\eta [Y^t-\frac{1}{\eta}(W-I)X^t]$ & \multirow{2}{*}{$\scriptstyle X^t, Y^t \hspace{1mm}(2\times)$}& \multirow{2}{*}{$80.61\pm 2.41 $}\\
Tracking & $\scriptstyle Y^{t}\hspace{3.2mm} = G^t + WY^{t-1} - G^{t-1}$ & & \\
\hline
\end{tabular}
}
\end{center}
\end{table}

\begin{table}[ht]
\caption{Evaluating Global Update Tracking (GUT) with various versions of momentum using CIFAR-10 dataset trained on ResNet-20 architecture over 16 agents ring topology 
}
\label{tab:moment}
\small
\begin{center}
\resizebox{0.85\columnwidth}{!}{
\begin{tabular}{lccccc}
\hline
\multirow{ 2}{*}{Method} & Local & Nesterov & Quasi-Global & Global Update & Test Accuracy\\
&Momentum &&Momentum&Tracking&$\alpha=0.1$\\
\hline
DSGD & \text{\sffamily x}  & \text{\sffamily x} &\text{\sffamily x} & \text{\sffamily x}& $72.21\pm 2.37$\\
DSGDm & \checkmark&\text{\sffamily x} &\text{\sffamily x} & \text{\sffamily x}& $79.87\pm 1.73$\\
DSGDm-N & \checkmark& \checkmark&\text{\sffamily x} &\text{\sffamily x} & $81.31\pm 0.51$\\
\hline
QG-DSGDm &\text{\sffamily x} &\text{\sffamily x} & \checkmark& \text{\sffamily x} & $ 84.21\pm 2.12$ \\
QG-DSGDm-N & \text{\sffamily x} &\checkmark & \checkmark&\text{\sffamily x} & $ 85.12\pm 1.11$  \\
\hline
GUT & \text{\sffamily x}& \text{\sffamily x}& \text{\sffamily x}&\checkmark & $81.86\pm 1.99$\\
GUTm &\checkmark &\text{\sffamily x} &\text{\sffamily x} & \checkmark& $79.95\pm 1.67$\\
GUTm-N & \checkmark &\checkmark &\text{\sffamily x} &\checkmark & $82.08\pm 1.74$\\
QG-GUTm & \text{\sffamily x}& \text{\sffamily x}&\checkmark & \checkmark& $86.44 \pm 0.36$\\
QG-GUTm-N &\text{\sffamily x} &\checkmark & \checkmark&\checkmark & $\mathbf{86.55} \pm 0.49$\\
 \hline
\end{tabular}
}
\end{center}
\end{table}

We then proceed to empirically investigate the effect of different variants of momentum with \textit{GUT}. From Table.~\ref{tab:moment} (refer to Appendix~\ref{apx:results} for more results), we can conclude that the quasi-global variant of Global Update Tracking always surpasses the other methods. This indicates that the proposed \textit{GUT} algorithm accelerates decentralized optimization and can be used in synergy with quasi-global momentum to achieve maximal performance gains. 

Furthermore, Figure~\ref{fig:mu} illustrates the effect of scaling $\mu$ on the test accuracy with \textit{QG-GUTm} and note that $\mu=0$ shows the test accuracy for QG-DSGDm. 
Figure~\ref{fig:agents}, \ref{fig:depth} showcase the scalability of \textit{QG-GUTm} on different graph sizes and model sizes. \textit{QG-GUTm} outperforms QG-DSGDm by $\sim 1.7\%$ over different graph sizes and $\sim 1.4\%$ over different model sizes.

\begin{figure}[ht]
\centering     

\subfigure[$n=16$, ResNet-20]
{\label{fig:mu}
\includegraphics[width=45mm]{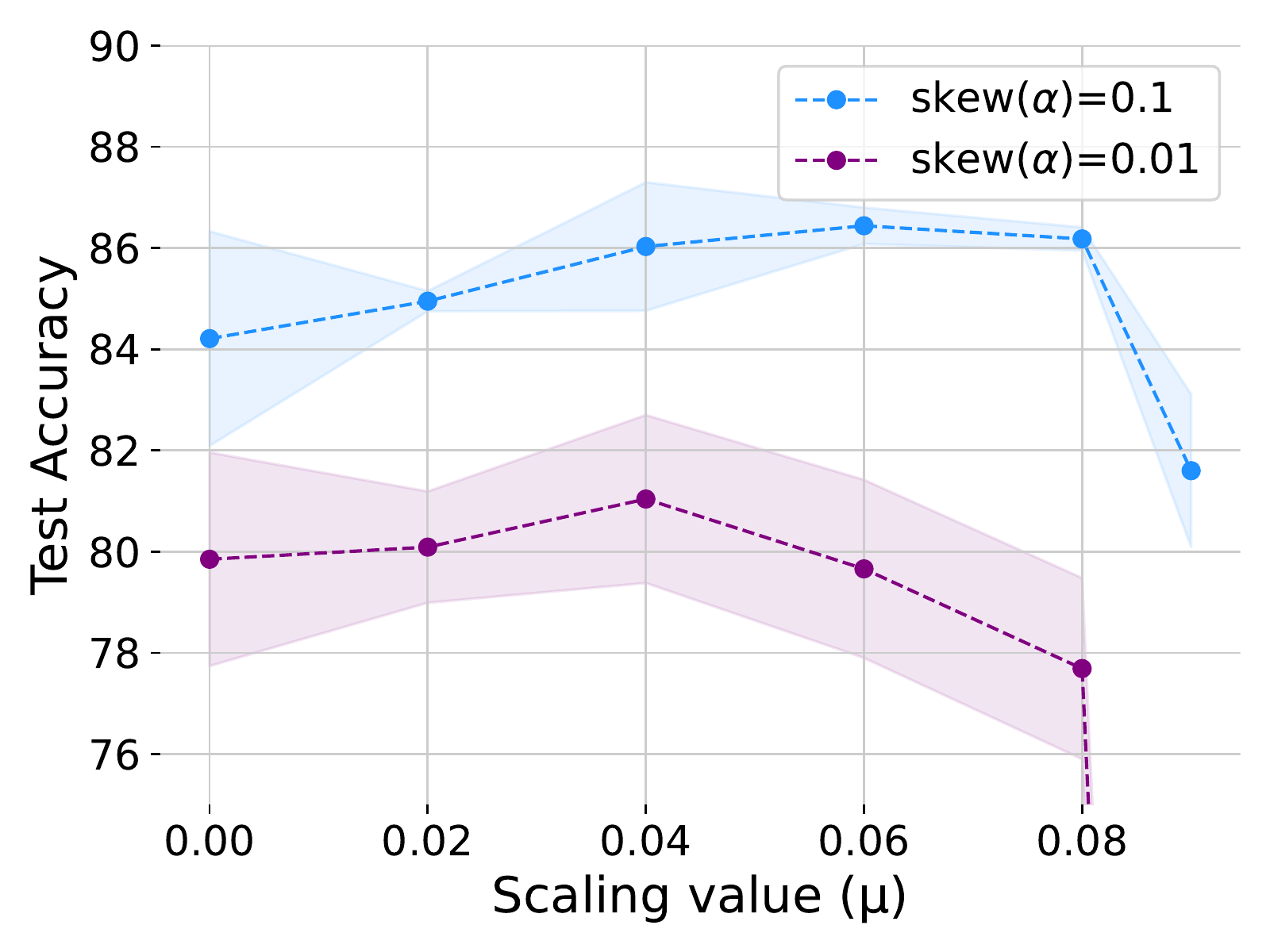}}
\subfigure[$\alpha=0.1$, ResNet-20]
{\label{fig:agents}
\includegraphics[width=45mm]{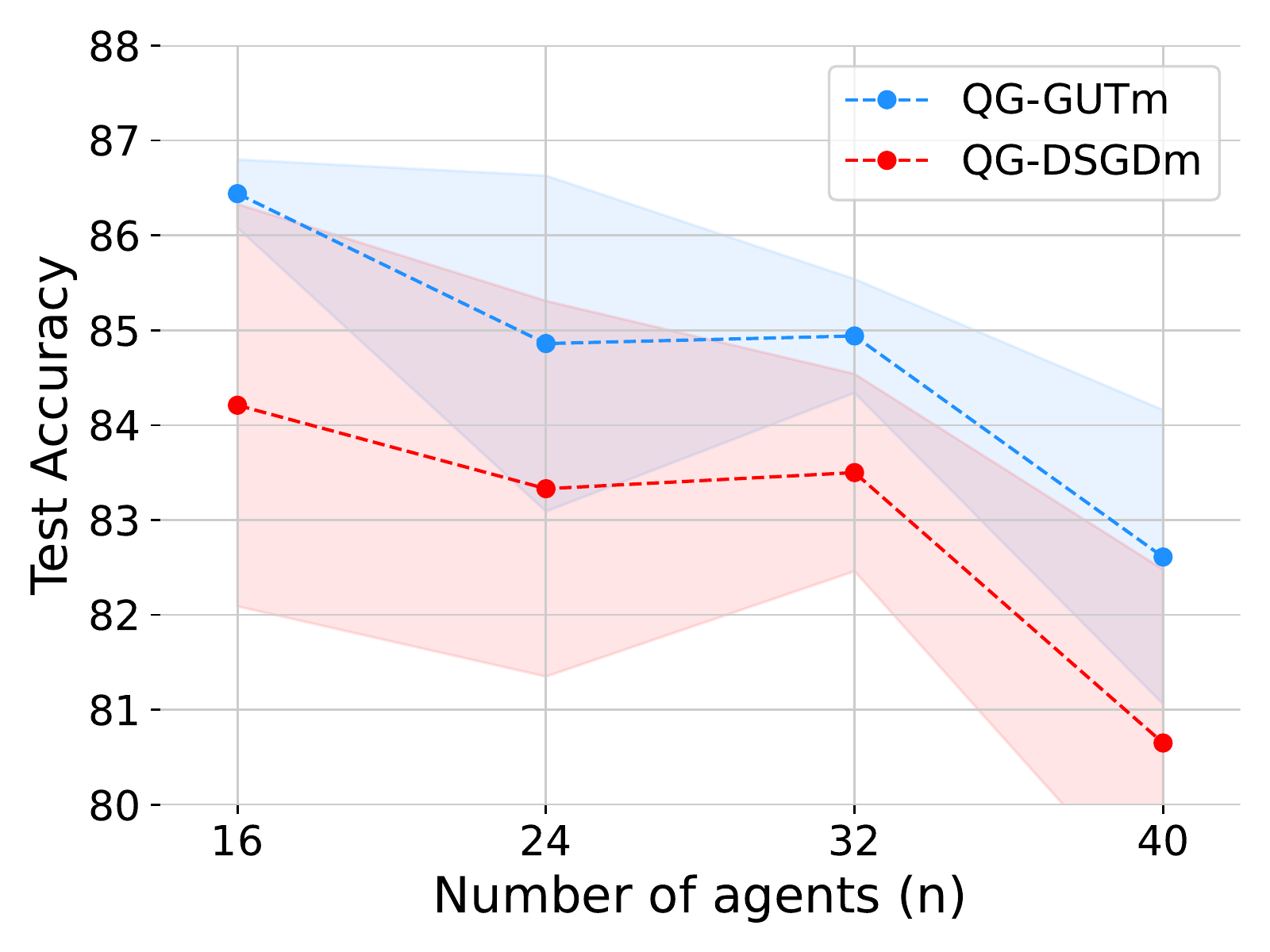}}
\subfigure[$\alpha=0.1$, $n=16$]
{\label{fig:depth}
\includegraphics[width=45mm]{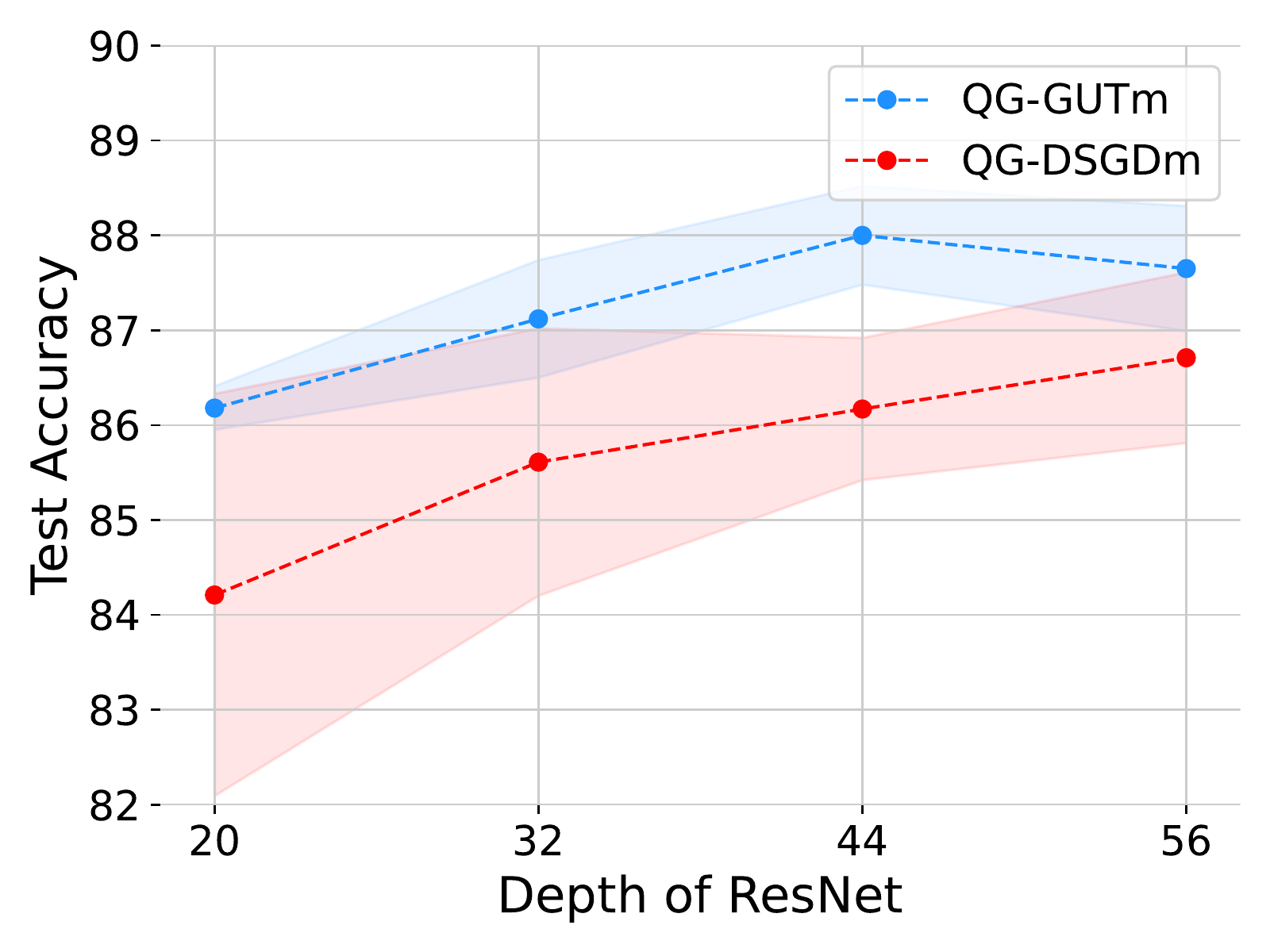}}
\caption{Ablation study on the hyper-parameter $\mu$, number of agents $n$ and model size. The test accuracy is reported for the CIFAR-10 dataset trained on ResNet architecture over ring topology.}
\label{fig:ablation}
\end{figure}

\section{Discussion and Limitations}
We demonstrated the superiority of the Global Update Tracking (\textit{GUT}) algorithm through an elaborate set of experiments and ablation studies. In our experiments, we focused on doubly-stochastic and symmetric graph structures. The proposed \textit{GUT} algorithm can be easily extended to directed and time-varying graphs by combining it with stochastic gradient push (SGP) \cite{sgp}. 
Further, the additional terms added by \textit{GUT} can also be interpreted as a bias correction mechanism where the added bias pushes the local model towards the consensus (averaged) model. The matrix representation of this interpretation of \textit{GUT} is given by \eqref{eq:gut_b}. and analyzed in Lemma \ref{lemma1}.
\begin{equation}
\label{eq:gut_b}
\begin{split}
   X^{t+1} &= W X^t - \eta (G^t+\mu B^t);\hspace{4mm}
   B^{t+1} = -\frac{1}{\eta}[(2W-I)(X^{t+1}-X^{t})+\eta G^{t}].
\end{split}
\end{equation}
\begin{lemma}
\label{lemma1}
Given assumptions 3, we define  $\Bar{b}^t = B^t \frac{1}{n} \mathbbm{1} \mathbbm{1}^T$, where $\mathbbm{1}$ is a vector of all ones. For all $t$, we have:
 $\Bar{b}^t = \mu \Bar{b}^{t-1}$. 
\end{lemma}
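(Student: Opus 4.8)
The plan is to right-multiply both update rules in \eqref{eq:gut_b} by the averaging matrix $J := \frac{1}{n}\mathbbm{1}\mathbbm{1}^T$ and to exploit the double stochasticity of $W$ (Assumption~\ref{assumption:weight-matrix}), which forces the mixing operator to act as the identity on the consensus subspace. Concretely, since $W\mathbbm{1}=\mathbbm{1}$ and $\mathbbm{1}^T W = \mathbbm{1}^T$, we have $WJ = JW = J$, and hence $(W-I)J = 0$ and $(2W-I)J = J$. These two identities are the whole engine of the argument: the first annihilates the gossip term, and the second preserves the model-difference term after averaging.

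First I would apply $J$ on the right to the definition of $B^{t+1}$. Because $B^t,X^t,G^t \in \mathbb{R}^{d\times n}$ carry the agent index as their column index, right-multiplication by $J$ is exactly the per-coordinate average over agents, so $\bar b^{t+1} = B^{t+1}J$. Using $(2W-I)J = J$, this collapses to $\bar b^{t+1} = -\frac{1}{\eta}\big[(X^{t+1}-X^t)J + \eta G^t J\big]$. Next I would substitute the model update $X^{t+1}-X^t = X^t(W-I) - \eta G^t - \eta\mu B^t$, read off from the first line of \eqref{eq:gut_b}, and again right-multiply by $J$. Here $(W-I)J = 0$ kills the $X^t$ contribution and $B^t J = \bar b^t$ by definition, giving $(X^{t+1}-X^t)J = -\eta G^t J - \eta\mu\bar b^t$. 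Plugging this back in, the two $G^t J$ terms cancel and I am left with $\bar b^{t+1} = \mu\bar b^t$, which is the claim after shifting the index.

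The computation is short, so the only real obstacle is bookkeeping rather than ideas. The step I would be most careful about is the exact cancellation of the stochastic-gradient terms $G^t J$: it hinges on the coefficient $(2W-I)$ in the bias recursion producing precisely $J$ under averaging (not $2J$ or $0$), so that the $+\eta G^t J$ introduced by the $X$-update exactly offsets the $+\eta G^t J$ sitting inside the definition of $B^{t+1}$. A secondary point is the multiplication convention: since $J$ is applied on the right, the mixing matrix must be read as acting on the agent (column) index, so that $WJ=J$; for the symmetric doubly-stochastic $W$ assumed here this is unambiguous. No induction or base case is needed, since the identity $\bar b^t = \mu\bar b^{t-1}$ follows for every $t$ directly from the one-step recursion.
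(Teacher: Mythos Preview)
Your proposal is correct and follows essentially the same approach as the paper's proof: both right-multiply the two update rules in \eqref{eq:gut_b} by the averaging operator, use Assumption~\ref{assumption:weight-matrix} to reduce $(2W-I)$ to the identity and $(W-I)$ to zero on the consensus subspace, and then observe that the averaged gradient terms cancel, leaving $\bar b^{t+1}=\mu\bar b^t$. Your version is slightly more explicit about the multiplication convention (writing $X^t(W-I)$ rather than the paper's informal $(W-I)X^t$), but the argument is otherwise identical.
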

A complete proof for Lemma~\ref{lemma1} can be found in Appendix~A.2.
Lemma~\ref{lemma1} highlights that the average bias added by \textit{GUT} is zero as $\Bar{b}^0$ is zero. Hence, the \textit{GUT} algorithm crucially preserves the average value of the decentralized system. A feature we leverage to establish Theorem~\ref{theorem_1}.

There are two potential limitations of the \textit{GUT} algorithm - a) memory overhead and b) introduction of an additional hyper-parameter.
\textit{GUT} requires the agents to keep a copy of averaged model parameters of their neighbors which adds an extra memory buffer of the size of model parameters. The storage of the tracking variable also adds to the memory overhead, requiring additional memory equivalent to the size of model parameters.
We also introduce a new hyper-parameter $\mu$ which has to be tuned similarly to the learning rate or momentum coefficient tuning.
Besides, the theoretical analysis presented for the \textit{GUT} algorithm does not consider momentum and assumes the communication to be synchronous. We leave the theoretical analysis of \textit{QG-GUTm} and formulation of the asynchronous version of \textit{GUT} as a future research direction.

\section{Conclusion}
Decentralized learning on heterogeneous data is the key to launching ML training on edge devices and thereby efficiently leveraging the humongous amounts of user-generated private data.
In this paper, we propose \textit{Global Update Tracking} (\textit{GUT}), a novel decentralized algorithm designed to improve learning over heterogeneous data distributions. 
The convergence analysis presented in the paper shows that the proposed algorithm matches the best-known rate for decentralized algorithms.
Additionally, the paper introduces a quasi-global momentum version of the algorithm, QG-GUTm, to further enhance the performance gains.
The empirical evidence from experiments on different model architectures, datasets, and topologies demonstrates the superior performance of both algorithms.
In summary, the proposed algorithm and its quasi-global momentum version have the potential to facilitate more scalable and efficient decentralized learning on edge devices.

{
\bibliographystyle{ieee_fullname}
\bibliography{egbib}
}
%

\clearpage
\appendix
\clearpage


\section{Convergence Rate Proof}
\label{appendix:convergence_rate_proofs}
In this work, we solve the optimization problem of minimizing global loss function $f(x)$ distributed across $n$ agents as given below. 
Note that $F_i$ is a local loss function (for example, cross-entropy loss) defined in terms of the data sampled ($d_i$) from the local dataset $D_i$ at agent $i$.
\begin{equation*}
\begin{split}
    \min \limits_{x \in \mathbb{R}^d} f(x) &= \frac{1}{n}\sum_{i=1}^n f_i(x), \\
    and \hspace{2mm} f_i(x) &= \mathbb{E}_{d_i \in D_i}[F_i(x; d_i)] \hspace{2mm} \forall i.
\end{split}
\end{equation*}
We reiterate the update scheme of \textit{GUT} presented in Algorithm.~\ref{alg:GUT} in a matrix form:
\begin{align} 
\label{eq:appendix_our_scheme_matrix_form}
	\begin{split}
		X^{t+1} 	&= X^t - \eta Y^t \\
		Y^t & = \Delta^t + \mu [W Y^{t-1} - \frac{1}{\eta} (W-I) X^t - \Delta^{t-1}] \\
        \Delta^t & = G^t - \frac{1}{\eta} (W-I) X^t,
	\end{split}
\end{align}
where $W$ is the mixing matrix, $I$ is the identity matrix, $X=[x_1,x_2, \hdots,x_n] \in \mathbb{R}^{d \times n}$ is the matrix containing model parameters, $x_i \in \mathbb{R}^{d}$ is model parameters of agent $i$, $Y=[y_1,y_2, \hdots,y_n] \in \mathbb{R}^{d \times n}$ is the matrix containing tracking variables, $G=[g_1,g_2, \hdots,g_n] \in \mathbb{R}^{d \times n}$ is the matrix containing local gradients, $\mu$ is the \textit{GUT} scaling factor, $\eta$ is the learning rate. Now, we rewrite the above equation in the form of a bias correction update,

\begin{align} 
\label{eq:appendix_bias_matrix_form}
	\begin{split}
		X^{t+1} 	&= W X^t - \eta (G^t+\mu B^t) \\
		B^t & = -\frac{1}{\eta} [(2W-I) (X^{t}-X^{t-1}) +\eta G^{t-1}]. \\
	\end{split}
\end{align}

\subsection{Assumptions}
We assume that the following statements hold:

\textbf{Assumption 1 - Lipschitz Gradients:} Each function $f_i(x)$ is L-smooth i.e., $||\nabla f_i(y) - \nabla f_i(x)||\leq L ||y-x||$. Equivalently, 
\begin{equation}
\label{eq:l}
f_i (y) \leq f_i (x) + \langle \nabla f_i(x), y - x \rangle + \frac{L}{2} ||y - x||^2
\end{equation}

\textbf{Assumption 2 - Bounded Variance:} The variance of the stochastic gradients is assumed to be bounded.
\begin{equation*}
\label{eq:inv}
\mathbb{E}_{ d \sim D_i} || \nabla F_i(x; d) - \nabla f_i(x)||^2 \leq \sigma^2 \hspace{2mm} \forall i \in [1,n]
\end{equation*}
\begin{equation*}
\label{eq:outv}
\frac{1}{n} \sum_{i=1}^n  || \nabla f_i(x) - \nabla \mathcal{F}(x)||^2 \leq \zeta^2 
\end{equation*}
\textbf{Assumption 3 - Doubly Stochastic Mixing Matrix:} The mixing matrix $W$ is a real doubly stochastic matrix with $\lambda_1(W)=1$ and
\begin{equation*}
\label{eq:w}
max{\{|\lambda_2(W)|, |\lambda_N(W)|\}} \leq 1-\rho <1
\end{equation*}
where $\lambda_i(W)$ is the $i^{th}$ largest eigenvalue of W and $\rho$ is the spectral gap. The mixing matrix satisfies $\mathbb{E}_W||ZW-\Bar{Z}||_F^2 \leq (1-\rho)||ZW-\Bar{Z}||_F^2$, where $\Bar{Z}=Z \frac{1}{n}\mathbbm{1} \mathbbm{1}^T$. We also have $W\mathbbm{1}=\mathbbm{1}$ and $W^T\mathbbm{1}=\mathbbm{1}$.

Further, we define the average gradients $\Bar{g}^t = \frac{1}{n} \sum_{i=1}^n \nabla F_i(x_i^t, d_i^t)$ where $d_i^t$ is sampled mini-batch of data on node $i$

\subsection{Proof of Lemma~\ref{lemma1}}
\label{appendix:lemma_1}

\textbf{Lemma~\ref{lemma1}:} \textit{Given assumptions 3, we define  $\Bar{b}^t = B^t \frac{1}{n} \mathbbm{1} \mathbbm{1}^T$, where $\mathbbm{1}$ is a vector of all ones. For all $t$, we have:
 $\Bar{b}^t = \mu \Bar{b}^{t-1}$. }

\begin{proof}
	Starting from the definition of $B^t$
	\begin{align*}
    &B^t = -\frac{1}{\eta} [(2W-I)(X^t-X^{t-1})+\eta G^{t-1}]\\
    & \text{multiply } \frac{1}{n} \mathbbm{1} \mathbbm{1}^T \text{ on both sides}\\
    &\Bar{b}^t =  -\frac{1}{\eta} [\Bar{x}^t-\Bar{x}^{t-1}+\eta \Bar{g}^{t-1}] \hspace{4mm} (\because (2W-I)\mathbbm{1}=\mathbbm{1})\\
    & \text{now, multiplying } \frac{1}{n} \mathbbm{1} \mathbbm{1}^T \text{ to  } X^{t+1}= W X^t - \eta (G^t+\mu B^t)\\
    &\Bar{x}^{t+1} = \Bar{x}^{t} - \eta \Bar{g}^t - \eta \mu \Bar{b}^t \implies \Bar{x}^{t} -\Bar{x}^{t-1} + \eta \Bar{g}^{t-1} = - \eta \mu \Bar{b}^{t-1}\\
    &\implies \Bar{b}^t = \mu \Bar{b}^{t-1}
    \end{align*}
\end{proof}

Given that $\Bar{b}^0=0$, the average bias is zero at each iteration. This indicates that the proposed algorithm \textit{GUT} preserves the average of the system in an average consensus task. 

\subsection{Proof of Theorem~\ref{theorem_1}}

This section presents the detailed proof of the convergence bounds of \textit{GUT} algorithm given by Theorem~\ref{theorem_1}. Firstly, we analyze the one-step progress of the averaged model parameters $\Bar{x}$. Note that,  $\Bar{X}^t=[\Bar{x}^t,\Bar{x}^t, \hdots,\Bar{x}^t] \in \mathbb{R}^{d \times n}$ and $\Bar{x}^t  = \frac{1}{n} \sum_{i=1}^n x_i^t$

\begin{lemma} 
\label{lemma2}
Given assumptions 1-3 and $\eta \leq \frac{1}{4L}$, we have  \\
 $\mathbb{E} f(\Bar{x}^{t+1}) \leq \mathbb{E} f(\Bar{x}^{t}) - \frac{\eta}{4} \mathbb{E} || \nabla f(\Bar{x}^{t})||^2 - \frac{\eta}{4} \mathbb{E} || \frac{1}{n}\sum_{i=1}^n \nabla f(x_i^{t})||^2 +\frac{L \eta^2 \sigma^2}{n} +\frac{3 L \eta^2}{n} ||X^t - \Bar{X}^t||_F^2$. 
\end{lemma}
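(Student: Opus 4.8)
The plan is to treat this as a standard descent lemma for the averaged (consensus) iterate, where the only non-routine ingredient is the reduction supplied by Lemma~\ref{lemma1}. First I would use Lemma~\ref{lemma1} together with the bias-correction form \eqref{eq:appendix_bias_matrix_form} to show that the consensus iterate obeys a clean stochastic-gradient recursion. Multiplying $X^{t+1} = W X^t - \eta(G^t + \mu B^t)$ by $\frac{1}{n}\mathbf{1}\mathbf{1}^T$ and using $W\mathbf{1}=\mathbf{1}$ together with $\bar b^t = 0$ (which follows from $\bar b^t = \mu \bar b^{t-1}$ and $\bar b^0 = 0$), the $\mu B^t$ term drops out and I obtain $\bar x^{t+1} = \bar x^t - \eta \bar g^t$ with $\bar g^t = \frac{1}{n}\sum_i \nabla F_i(x_i^t, d_i^t)$. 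This is the crucial step: it strips all tracking and gossip structure from the averaged dynamics, reducing the analysis to vanilla SGD driven by the average stochastic gradient.

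Next I would apply $L$-smoothness of $f$ (Assumption~\ref{assumption:lip}, averaged over the $f_i$) through the descent inequality \eqref{eq:l}, substitute $\bar x^{t+1}-\bar x^t = -\eta \bar g^t$, and take the conditional expectation over the mini-batch draws at step $t$. Writing $\hat g^t = \frac{1}{n}\sum_i \nabla f_i(x_i^t)$ for the conditional mean of $\bar g^t$, unbiasedness gives $\mathbb{E}\langle \nabla f(\bar x^t), \bar g^t\rangle = \langle \nabla f(\bar x^t), \hat g^t\rangle$, and the bias--variance split $\mathbb{E}\|\bar g^t\|^2 = \|\hat g^t\|^2 + \mathbb{E}\|\bar g^t - \hat g^t\|^2$ controls the noise by $\sigma^2/n$ using independence of the sampling across agents and the variance bound in Assumption~\ref{assumption:variance}; this produces the $\frac{L\eta^2\sigma^2}{n}$ term.

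I would then handle the remaining cross term $-\eta\langle \nabla f(\bar x^t), \hat g^t\rangle$ with the polarization identity $-\langle a,b\rangle = -\tfrac12\|a\|^2 - \tfrac12\|b\|^2 + \tfrac12\|a-b\|^2$, yielding $-\frac{\eta}{2}\|\nabla f(\bar x^t)\|^2$, $-\frac{\eta}{2}\|\hat g^t\|^2$, and a drift term $\frac{\eta}{2}\|\nabla f(\bar x^t)-\hat g^t\|^2$. The step-size condition $\eta \le \frac{1}{4L}$ lets me absorb the quadratic $\frac{L\eta^2}{2}\|\hat g^t\|^2$ coming from smoothness into $-\frac{\eta}{2}\|\hat g^t\|^2$, leaving at most $-\frac{\eta}{4}\|\hat g^t\|^2$ and matching the two negative terms in the claim. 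Finally, the drift term is exactly where heterogeneity enters: writing $\nabla f(\bar x^t)-\hat g^t = \frac{1}{n}\sum_i(\nabla f_i(\bar x^t)-\nabla f_i(x_i^t))$, Jensen's inequality and $L$-smoothness bound it by the scaled consensus error $\frac{L^2}{n}\|X^t-\bar X^t\|_F^2$, which furnishes the final heterogeneity-controlled term of the claim (up to the stated constant).

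I expect the main obstacle to be the first step rather than the routine SGD bookkeeping: one must correctly invoke Lemma~\ref{lemma1} to guarantee that the $\mu$-scaled bias $B^t$ averages to zero, so that the consensus iterate is genuinely unbiased SGD; without this the $B^t$ term would contaminate the descent inequality and the clean recursion would fail. Bounding the gradient-drift term by the consensus error via smoothness is the only other place requiring care, and it is the term through which the subsequent consensus-contraction estimate (and hence the heterogeneity parameter $\zeta$) will eventually feed into Theorem~\ref{theorem_1}.
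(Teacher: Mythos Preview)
Your proposal is correct and follows essentially the same route as the paper's proof: invoke Lemma~\ref{lemma1} to reduce the consensus update to $\bar x^{t+1}=\bar x^t-\eta\bar g^t$, apply the $L$-smoothness descent inequality, split the stochastic gradient into mean plus noise (yielding the $\tfrac{L\eta^2\sigma^2}{n}$ term), use the polarization identity on the cross term, absorb $\tfrac{L\eta^2}{2}\|\hat g^t\|^2$ via $\eta\le\tfrac{1}{4L}$, and bound the drift $\|\nabla f(\bar x^t)-\hat g^t\|^2$ by the consensus error through Jensen and $L$-smoothness. The only cosmetic difference is that the paper relaxes constants at the final step, which your ``up to the stated constant'' already anticipates.
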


\begin{proof}
	From the definition of $X^{t+1}$, we have
 \begingroup
\allowdisplaybreaks
	\begin{align*}
    &X^{t+1} = WX^t - \eta [G^t+\mu B^t]\\
    \implies & \Bar{x}^{t+1} = \Bar{x}^t - \eta \Bar{g}^t \hspace{4mm} (\because \Bar{b}^t=0 \text{from Lemma~\ref{lemma1}})
    \end{align*}
    using L-smoothness assumption given by \eqref{eq:l}
    \begin{align*}
    \mathbb{E} f(\Bar{x}^{t+1}) & \leq \mathbb{E} f(\Bar{x}^{t}) + \mathbb{E} \langle \nabla f(\Bar{x}^t), \Bar{x}^{t+1}-\Bar{x}^t \rangle + \frac{L}{2} \mathbb{E} ||\Bar{x}^{t+1}-\Bar{x}^t||^2 \\
    & =  \mathbb{E} f(\Bar{x}^{t}) + \mathbb{E} \langle \nabla f(\Bar{x}^t), -\eta \Bar{g}^t \rangle + \frac{L\eta^2}{2} \mathbb{E} ||\Bar{g}^t||^2 \\
    & =  \mathbb{E} f(\Bar{x}^{t}) - \eta \mathbb{E} \langle \nabla f(\Bar{x}^t),  \mathbb{E}[\Bar{g}^t] \rangle + \frac{L\eta^2}{2} \mathbb{E} ||\frac{1}{n} \sum_{i=1}^n \nabla F_i(x_i^t)||^2\\
    & =  \mathbb{E} f(\Bar{x}^{t}) - \eta \frac{1}{n} \sum_{i=1}^n \mathbb{E} \langle \nabla f(\Bar{x}^t),  \nabla f_i(x_i^t) \rangle + \frac{L\eta^2}{2} \mathbb{E} ||\frac{1}{n} \sum_{i=1}^n (\nabla F_i(x_i^t) \pm \nabla f_i(x_i^t))||^2\\
    &  \overset{(a)}{\leq} \mathbb{E} f(\Bar{x}^{t}) - \eta \mathbb{E} \langle \nabla f(\Bar{x}^t),   \frac{1}{n} \sum_{i=1}^n \nabla f_i(x_i^t) \rangle + \frac{L\eta^2}{2} \mathbb{E} ||\frac{1}{n} \sum_{i=1}^n  \nabla f_i(x_i^t)||^2+\frac{L \eta^2 \sigma^2}{n}\\
    & \overset{(b)}{=} \mathbb{E} f(\Bar{x}^{t}) + \frac{L \eta^2 \sigma^2}{n} + \frac{L\eta^2}{2} \mathbb{E} ||\frac{1}{n} \sum_{i=1}^n  \nabla f_i(x_i^t)||^2 - \frac{\eta}{2} \mathbb{E} || \nabla f(\Bar{x}^t)||^2 \\
    &  \hspace{4mm}  - \frac{\eta}{2} \mathbb{E} ||\frac{1}{n} \sum_{i=1}^n  \nabla f_i(x_i^t)||^2 + \frac{\eta}{2} \mathbb{E} ||\frac{1}{n} \sum_{i=1}^n (\nabla f_i(x_i^t)- \nabla f(\Bar{x}^t) )||^2 \\
    &  \overset{(c)}{\leq} \mathbb{E} f(\Bar{x}^{t}) + \frac{L \eta^2 \sigma^2}{n} + (\frac{L\eta^2}{2}-\frac{\eta}{2}) \mathbb{E} ||\frac{1}{n} \sum_{i=1}^n  \nabla f_i(x_i^t)||^2 - \frac{\eta}{2} \mathbb{E} || \nabla f(\Bar{x}^t)||^2 \\
    &  \hspace{4mm} + \frac{\eta}{2n} \sum_{i=1}^n\mathbb{E} ||\nabla f_i(x_i^t)- \nabla f(\Bar{x}^t )||^2\\ 
    &  \overset{(d)}{\leq} \mathbb{E} f(\Bar{x}^{t}) + \frac{L \eta^2 \sigma^2}{n} + (\frac{L\eta^2}{2}-\frac{\eta}{2}) \mathbb{E} ||\frac{1}{n} \sum_{i=1}^n  \nabla f_i(x_i^t)||^2 - \frac{\eta}{2} \mathbb{E} || \nabla f(\Bar{x}^t)||^2 \\
    &  \hspace{4mm} + \frac{L^2 \eta}{2n} \sum_{i=1}^n\mathbb{E} ||x_i^t-\Bar{x}^t||^2\\
    &  \overset{(e)}{\leq} \mathbb{E} f(\Bar{x}^{t})  -\frac{\eta}{4} \mathbb{E} ||\frac{1}{n} \sum_{i=1}^n  \nabla f_i(x_i^t)||^2 - \frac{\eta}{4} \mathbb{E} || \nabla f(\Bar{x}^t)||^2+ \frac{L \eta^2 \sigma^2}{n} \\
     &  \hspace{4mm}  + \frac{3 L^2 \eta}{n} \sum_{i=1}^n\mathbb{E} ||X^t-\Bar{X}^t||_F^2\\
    \end{align*}
    \endgroup
(a) uses assumption-2 (\eqref{eq:inner_variance}). (b) uses the fact that $-2\langle a,b \rangle = -||a||^2-||b||^2+||a-b||^2$. (c) uses Jensen's inequality. (d) uses L-smoothness condition. (e) follows from the assumption that $\eta \leq \frac{1}{4L}$.
\end{proof}

Now, we proceed to bound the consensus error through Lemma~\ref{lemma3}.

\begin{lemma} 
\label{lemma3}
Given assumptions 1-3 and $\eta \leq \frac{\rho}{7L}$, we have  \\
 $\frac{1}{n} \mathbb{E} ||X^{t+1} - \Bar{X}^{t+1}||_F^2 \leq \frac{1-\rho/4}{n} \mathbb{E} ||X^{t} - \Bar{X}^{t}||_F^2 +\frac{12 \eta^2 \zeta^2}{\rho} + 4 \eta^2 \sigma^2 + \frac{6 \eta^2 \mu^2}{\rho n} \mathbb{E} ||B^t||_F^2$. 
\end{lemma}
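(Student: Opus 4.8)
The plan is to track the one–step evolution of the consensus error $\tfrac1n\|X^t-\bar X^t\|_F^2$ using the bias–correction form \eqref{eq:appendix_bias_matrix_form} together with Lemma~\ref{lemma1}. First I would invoke $\bar b^t=0$ from Lemma~\ref{lemma1} to average the update and obtain $\bar X^{t+1}=\bar X^t-\eta\bar G^t$, which gives the decomposition
\[
X^{t+1}-\bar X^{t+1}=(WX^t-\bar X^t)-\eta\big(G^t-\bar G^t\big)-\eta\mu B^t,
\]
where I have used that $B^t$ already has zero column-average (again Lemma~\ref{lemma1}). Next I would peel off the stochastic gradient noise $\Xi^t=G^t-\nabla F(X^t)$, writing $G^t-\bar G^t=(\nabla F(X^t)-\overline{\nabla F(X^t)})+(\Xi^t-\bar\Xi^t)$. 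Since $\Xi^t$ is zero-mean and independent of the history (hence of $X^t$, $\nabla F(X^t)$ and $B^t$), taking conditional expectation kills every cross term involving $\Xi^t$. This is the crucial point that keeps the variance contribution free of any $1/\rho$ amplification: using $\|\Xi^t(I-J)\|_F^2\le\|\Xi^t\|_F^2$ with $J=\tfrac1n\mathbbm 1\mathbbm 1^T$ and $\mathbb E\|\Xi^t\|_F^2\le n\sigma^2$ (Assumption~\ref{assumption:variance}), the noise yields only a clean $\eta^2\sigma^2\le 4\eta^2\sigma^2$ term.

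For the remaining \emph{deterministic} quantity $(WX^t-\bar X^t)-\eta(\nabla F(X^t)-\overline{\nabla F(X^t)})-\eta\mu B^t$ I would apply Young's inequality $\|a+b\|_F^2\le(1+\gamma)\|a\|_F^2+(1+\gamma^{-1})\|b\|_F^2$ with $a=WX^t-\bar X^t$ and $\gamma=\rho/2$. Assumption~\ref{assumption:weight-matrix} (the $(1-\rho)$ contraction of $W$ in the direction orthogonal to $\mathbbm 1$) then bounds the first piece by $(1+\rho/2)(1-\rho)\le(1-\rho/2)$ times the old consensus error, while the second piece is amplified by $1+\gamma^{-1}=1+2/\rho\le 3/\rho$. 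A further factor-two split $\|(\nabla F-\overline{\nabla F})+\mu B^t\|_F^2\le 2\|\nabla F-\overline{\nabla F}\|_F^2+2\mu^2\|B^t\|_F^2$ separates the bias, reproducing the term $\tfrac{6\eta^2\mu^2}{\rho n}\mathbb E\|B^t\|_F^2$ verbatim.

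It then remains to bound $\tfrac1n\|\nabla F(X^t)-\overline{\nabla F(X^t)}\|_F^2=\tfrac1n\sum_i\|\nabla f_i(x_i^t)-\overline{\nabla f(x^t)}\|^2$. I would first use the variance-minimising property of the mean to replace $\overline{\nabla f(x^t)}$ by $\nabla f(\bar x^t)$ (an upper bound), then split $\nabla f_i(x_i^t)-\nabla f(\bar x^t)$ into $(\nabla f_i(x_i^t)-\nabla f_i(\bar x^t))+(\nabla f_i(\bar x^t)-\nabla f(\bar x^t))$. L-smoothness (Assumption~\ref{assumption:lip}) turns the first into $\tfrac{2L^2}{n}\|X^t-\bar X^t\|_F^2$ and the heterogeneity bound \eqref{eq:outer_variance} turns the second into $2\zeta^2$; scaling by $6\eta^2/\rho$ produces the $\tfrac{12\eta^2\zeta^2}{\rho}$ term together with a leftover $\tfrac{12\eta^2L^2}{\rho n}\|X^t-\bar X^t\|_F^2$.

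The main obstacle — and the only place the step-size restriction is used — is absorbing this leftover into the contraction coefficient. After collecting terms, the coefficient multiplying $\tfrac1n\mathbb E\|X^t-\bar X^t\|_F^2$ is $(1-\rho/2)+\tfrac{12\eta^2L^2}{\rho}$, and $\eta\le\rho/(7L)$ gives $\tfrac{12\eta^2L^2}{\rho}\le\tfrac{12}{49}\rho<\tfrac14\rho$, so the coefficient is at most $1-\rho/4$, exactly as claimed. The delicate part is the bookkeeping of the Young parameters so that all four output constants ($12/\rho$ on $\zeta^2$, $4$ on $\sigma^2$, $6/\rho$ on $\mu^2\|B^t\|_F^2$, and the $1-\rho/4$ rate) emerge simultaneously; in particular one must verify the sharp numerical check $12/49<1/4$ rather than accept a looser contraction factor.
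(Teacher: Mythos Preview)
Your proposal is correct and follows essentially the same route as the paper's proof: the same decomposition via Lemma~\ref{lemma1}, the same separation of the stochastic noise before applying Young's inequality (so that the $\sigma^2$ term avoids the $1/\rho$ blow-up), the same choice $\gamma=\rho/2$ in Young's inequality combined with the $(1-\rho)$ contraction of $W$, the same factor-two split isolating $\mu^2\|B^t\|_F^2$, the same variance-minimising replacement of $\overline{\nabla f(x^t)}$ by $\nabla f(\bar x^t)$ followed by the Lipschitz/heterogeneity split, and the same final numerical check $12\eta^2L^2/\rho\le 12\rho/49<\rho/4$.
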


\begin{proof}
	Starting from the update step \ref{eq:appendix_bias_matrix_form}
 \begingroup
\allowdisplaybreaks
	\begin{align*}
    \frac{1}{n} \mathbb{E} ||X^{t+1} - \Bar{X}^{t+1}||_F^2  = & \frac{1}{n} \mathbb{E} ||WX^t - \eta [G^t+\mu B^t] -(\Bar{X}^t-\eta \Bar{G}^t)||_F^2\\
    = & \frac{1}{n} \mathbb{E} ||WX^t -\Bar{X}^t - \eta (G^t-\Bar{G}^t)-\eta \mu B^t||_F^2\\
    \leq & \frac{1}{n} \mathbb{E} ||WX^t -\Bar{X}^t - \eta (\mathbb{E}[G^t]-\mathbb{E}[\Bar{G}^t])-\eta \mu B^t||_F^2+4 \eta^2 \sigma^2\\
    \overset{(a)}{\leq} & \frac{1+\rho/2}{n} \mathbb{E} ||WX^t -\Bar{X}^t||_F^2 + \frac{\eta^2 (1+2/\rho)}{n} \mathbb{E} ||\mathbb{E}[G^t]-\mathbb{E}[\Bar{G}^t]-\mu B^t||_F^2\\
    &+4 \eta^2 \sigma^2\\
    \overset{(b)}{\leq} & \frac{(1-\rho)(1+\rho/2)}{n} \mathbb{E} ||X^t -\Bar{X}^t||_F^2 + \frac{3\eta^2}{n\rho} \mathbb{E} ||\mathbb{E}[G^t]-\mathbb{E}[\Bar{G}^t]-\mu B^t||_F^2\\
    &+4 \eta^2 \sigma^2\\
    \leq & \frac{(1-\rho)(1+\rho/2)}{n} \mathbb{E} ||X^t -\Bar{X}^t||_F^2 +4 \eta^2 \sigma^2 + \frac{6\eta^2}{n\rho} \mathbb{E} ||\mathbb{E}[G^t]-\mathbb{E}[\Bar{G}^t]||_F^2\\
    &+ \frac{6\eta^2\mu^2}{n\rho} \mathbb{E} ||B^t||_F^2\\
    \leq & \frac{(1-\rho/2)}{n} \mathbb{E} ||X^t -\Bar{X}^t||_F^2 +4 \eta^2 \sigma^2 + \frac{6\eta^2}{n\rho} \mathbb{E} ||\mathbb{E}[G^t]-\nabla f(\Bar{x}^t)||_F^2\\
    &+ \frac{6\eta^2\mu^2}{n\rho} \mathbb{E} ||B^t||_F^2\\
    \leq & \frac{(1-\rho/2)}{n} \mathbb{E} ||X^t -\Bar{X}^t||_F^2 +4 \eta^2 \sigma^2+ \frac{6\eta^2\mu^2}{n\rho} \mathbb{E} ||B^t||_F^2\\
    &+ \frac{6\eta^2}{n\rho} \sum_{i=1}^n \mathbb{E} ||\nabla f_i(x_i^t) \pm \nabla f_i(\Bar{x}^t)-\nabla f(\Bar{x}^t)||_F^2\\
    \overset{(c)}{\leq} & \frac{(1-\rho/2)}{n} \mathbb{E} ||X^t -\Bar{X}^t||_F^2 +4 \eta^2 \sigma^2+ \frac{12 \eta^2 \zeta^2}{\rho}+\frac{6\eta^2\mu^2}{n\rho} \mathbb{E} ||B^t||_F^2\\
    &+ \frac{12\eta^2}{n\rho} \sum_{i=1}^n \mathbb{E} ||\nabla f_i(x_i^t) - \nabla f_i(\Bar{x}^t)||_F^2\\
    \overset{(d)}{\leq} & \frac{(1-\rho/2)}{n} \mathbb{E} ||X^t -\Bar{X}^t||_F^2 +4 \eta^2 \sigma^2+ \frac{12 \eta^2 \zeta^2}{\rho}+\frac{6\eta^2\mu^2}{n\rho} \mathbb{E} ||B^t||_F^2\\
    &+ \frac{12\eta^2 L^2}{n\rho} \sum_{i=1}^n \mathbb{E} ||x_i^t - \Bar{x}^t||_F^2\\
    = & \Big(\frac{1-\rho/2}{n}+ \frac{12\eta^2 L^2}{n\rho}\Big) \mathbb{E} ||X^t -\Bar{X}^t||_F^2 +4 \eta^2 \sigma^2+ \frac{12 \eta^2 \zeta^2}{\rho}\\
    &+\frac{6\eta^2\mu^2}{n\rho} \mathbb{E} ||B^t||_F^2\\
     \overset{(e)}{\leq} & \frac{1-\rho/4}{n} \mathbb{E} ||X^t -\Bar{X}^t||_F^2 +4 \eta^2 \sigma^2+ \frac{12 \eta^2 \zeta^2}{\rho}+\frac{6\eta^2\mu^2}{n\rho} \mathbb{E} ||B^t||_F^2\\
    \end{align*}
    \endgroup
    (a) follows from the fact that $||a+b||^2 \leq (1+\alpha) ||a||^2 +(1+\frac{1}{\alpha})||b||^2 \hspace{2mm} \forall \alpha>0$ and let $\alpha=\frac{\rho}{2}$.
(b) uses $\mathbb{E}_W||ZW-\Bar{Z}||_F^2 \leq (1-\rho)||ZW-\Bar{Z}||_F^2$ and $1+\frac{2}{\rho} \leq \frac{3}{\rho}$.
(c) uses assumption-2 \eqref{eq:outer_variance}.
(d) uses L-smoothness condition.
(e) follows from the assumption that $\eta \leq \frac{\rho}{7L}$
\end{proof}

The next step is to find an upper bound for the bias term $\mathbb{E} ||B^t||_F^2$. 
\begin{lemma} 
\label{lemma4}
Given assumptions 1-3 and $\frac{\mu}{1-\mu} \leq \frac{\rho}{42}$, we have  \\
 $ \frac{6 \eta^2 \mu^2}{\rho n (1-\mu)} \mathbb{E} ||B^{t+1}||_F^2 \leq \Big(\frac{6 \eta^2 \mu^2}{\rho n (1-\mu)} - \frac{6 \eta^2 \mu^2}{\rho n}\Big) \mathbb{E} ||B^{t}||_F^2+\frac{\rho}{8n} \mathbb{E} ||X^t -\Bar{X}^t||_F^2+ \frac{ \eta^2 \zeta^2 \rho}{8} + \frac{ \eta^2 \sigma^2 \rho (1-\mu)}{8}$.
\end{lemma}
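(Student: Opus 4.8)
The plan is to read \eqref{eq:appendix_bias_matrix_form} as a linear recursion for the bias matrix $B^t$ that is driven by the stochastic gradients and the consensus error, and to show it contracts at rate $\mu$ up to additive noise. First I would derive a closed recursion for $B^{t+1}$ in terms of $B^t$. Starting from $B^{t+1}=-\frac1\eta[(2W-I)(X^{t+1}-X^t)+\eta G^t]$ and substituting the model update $X^{t+1}-X^t=(W-I)X^t-\eta(G^t+\mu B^t)$ read off from \eqref{eq:appendix_bias_matrix_form}, the $G^t$ terms partially cancel and I obtain
\[
B^{t+1}=\mu(2W-I)B^t+2(W-I)G^t-\tfrac1\eta(2W-I)(W-I)X^t=:\mu(2W-I)B^t+R^t .
\]
Crucially, Lemma~\ref{lemma1} gives $\bar b^t=0$, so $B^t$ lives in the consensus-orthogonal subspace; combined with Assumption~\ref{assumption:weight-matrix} this lets me bound $\|(2W-I)B^t\|_F^2$ by the squared spectral radius of $2W-I$ on that subspace times $\|B^t\|_F^2$.

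Next I would apply the Peter--Paul inequality $\|a+b\|_F^2\le(1+\alpha)\|a\|_F^2+(1+\tfrac1\alpha)\|b\|_F^2$ with $a=\mu(2W-I)B^t$ and $b=R^t$. The free parameter $\alpha$ is chosen so that the coefficient of $\|B^t\|_F^2$ collapses to exactly $\mu$. Since $\|2W-I\|$ is only bounded by a constant and is \emph{not} a contraction, this forces $\alpha$ to be large, which is admissible precisely because the hypothesis $\frac{\mu}{1-\mu}\le\frac{\rho}{42}$ makes $\mu$ tiny; with that choice the multiplier of $\|R^t\|_F^2$ stays $O(1)$, at most of order $\frac{1}{1-\mu}$.

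The third step is to bound $\mathbb{E}\|R^t\|_F^2$, splitting $R^t$ into its gradient part $2(W-I)G^t$ and its consensus part $\frac1\eta(2W-I)(W-I)X^t$. Because $(W-I)$ annihilates consensus matrices, operator-norm estimates give $\|(2W-I)(W-I)X^t\|_F^2\lesssim\|X^t-\bar X^t\|_F^2$, which after multiplication by the prefactor $\frac{6\eta^2\mu^2}{\rho n(1-\mu)}$ produces the $\frac{\rho}{8n}\|X^t-\bar X^t\|_F^2$ term (the stray $\frac1{\eta^2}$ being absorbed through $\eta\le\frac{\rho}{7L}$). For the gradient part I reuse the decomposition already used in Lemma~\ref{lemma3}: insert $\pm\nabla f_i(\bar x^t)$ and $\pm\nabla f_i(x_i^t)$ and invoke unbiasedness with bounded variance (Assumption~\ref{assumption:variance}) and $L$-smoothness (Assumption~\ref{assumption:lip}) to get $\mathbb{E}\|G^t-\bar G^t\|_F^2\lesssim L^2\|X^t-\bar X^t\|_F^2+n\zeta^2+n\sigma^2$, which supplies the $\zeta^2$ and $\sigma^2$ terms.

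Finally I would collect everything, multiply through by $\frac{6\eta^2\mu^2}{\rho n(1-\mu)}$, and verify that each resulting constant is dominated by the target using $\eta\le\frac{\rho}{7L}$ and $\frac{\mu}{1-\mu}\le\frac{\rho}{42}$. The main obstacle is exactly this bookkeeping: since $2W-I$ amplifies rather than contracts, the argument hinges on the competition between the tiny factor $\mu^2$ in the prefactor and the $\frac1{\eta^2}$ blow-up in the consensus part of $R^t$, and the precise constant $42$ is what is consumed to keep the consensus coefficient below $\frac{\rho}{8n}$ and the noise coefficients below $\frac{\eta^2\zeta^2\rho}{8}$ and $\frac{\eta^2\sigma^2\rho(1-\mu)}{8}$. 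A cleaner but essentially equivalent route instead rewrites the recursion as $B^{t+1}=\mu WB^t+E^t$, which genuinely contracts via $\|WB^t\|_F^2\le(1-\rho)\|B^t\|_F^2$ on the zero-mean subspace, at the cost of pushing the consensus error to time $t+1$ and thereby reintroducing $\|B^t\|_F^2$ with an $O(\mu^2)$ coefficient that must again be absorbed into the $\mu\|B^t\|_F^2$ contraction through $\frac{\mu}{1-\mu}\le\frac{\rho}{42}$.
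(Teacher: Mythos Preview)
Your proposal is correct and follows essentially the same route as the paper: derive the recursion $B^{t+1}=\mu(2W-I)B^t+2(W-I)G^t-\tfrac1\eta(2W-I)(W-I)X^t$, apply Peter--Paul with parameter $\alpha=\tfrac{1-\mu}{\mu}$ so the $B^t$ coefficient collapses to $\mu$, bound the driving terms via $L$-smoothness and the variance assumptions, multiply through by the prefactor, and absorb the constants using $\eta\le\tfrac{\rho}{7L}$ and $\tfrac{\mu}{1-\mu}\le\tfrac{\rho}{42}$. The only small ordering difference is that the paper separates the martingale part of $G^t$ \emph{before} applying Peter--Paul, so the $\sigma^2$ term never picks up the $\tfrac{1}{1-\mu}$ multiplier; this is precisely what yields the sharper coefficient $\tfrac{\eta^2\sigma^2\rho(1-\mu)}{8}$ rather than $\tfrac{\eta^2\sigma^2\rho}{8}$.
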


\begin{proof}
	starting from the update step \ref{eq:appendix_bias_matrix_form}
 \begingroup
\allowdisplaybreaks
	\begin{align*}
    B^{t+1}  =& -\frac{1}{\eta} [(2W-I) (X^{t+1}-X^{t}) +\eta G^{t}] \\
    =& -\frac{1}{\eta} [(2W-I) (WX^{t}-\eta G^t-\eta \mu B^t-X^{t}) +\eta G^{t}] \\
    =& -\frac{1}{\eta} [W(2W-I)-I] X^{t}+2(W-I) G^t+ \mu (2W-I) B^t. \\
    \end{align*}
    Now, 
    \begin{align*}
     \frac{1}{n} \mathbb{E}||B^{t+1}||_F^2 =& \frac{1}{n} \mathbb{E}||-\frac{1}{\eta} (W(2W-I)-I) X^{t}+2(W-I) G^t+ \mu (2W-I) B^t||_F^2 \\
     =& \frac{1}{n} \mathbb{E}||-\frac{1}{\eta} (W(2W-I)-I) X^{t}+2(W-I)(G^t-\Bar{G}^t)+ \mu (2W-I) B^t||_F^2 \\
     =& \frac{1}{n} \mathbb{E}||-\frac{1}{\eta} (W(2W-I)-I) X^{t}+2(W-I)\mathbb{E}[G^t-\Bar{G}^t]+ \mu (2W-I) B^t||_F^2 \\
     &+\frac{1}{n} \mathbb{E}||2(W-I)(G^t - \mathbb{E}[G^t]-(\Bar{G}^t-\mathbb{E}[\Bar{G}^t]))||_F^2 \\
     \leq & \frac{1}{n} \mathbb{E}||\frac{1}{\eta} (I-W(2W-I)) X^{t}+2(W-I)\mathbb{E}[G^t-\Bar{G}^t]+ \mu (2W-I) B^t||_F^2 \\
      &+8\sigma^2\\
      \overset{(a)}{\leq} & \frac{1}{n}\Big(1+\frac{1-\mu}{\mu}\Big) \mathbb{E}||\mu (2W-I) B^t||_F^2 + 8\sigma^2\\
      &+\frac{1}{n}\Big(1+\frac{\mu}{1-\mu}\Big) \mathbb{E}||\frac{1}{\eta} (I-W(2W-I)) X^{t}+2(W-I)\mathbb{E}[G^t-\Bar{G}^t]||_F^2\\
      \leq & \frac{\mu}{n} \mathbb{E}|| B^t||_F^2 + 8\sigma^2+\frac{2}{n(1-\mu)}\mathbb{E}||\mathbb{E}[G^t-\Bar{G}^t]||_F^2\\
      &+\frac{2}{n\eta^2(1-\mu)}\mathbb{E}||(I-W(2W-I)) X^{t}||_F^2\\
      = & \frac{1-(1-\mu)}{n} \mathbb{E}|| B^t||_F^2 + 8\sigma^2+\frac{2}{n(1-\mu)}\mathbb{E}||\mathbb{E}[G^t-\Bar{G}^t]||_F^2\\
      &+\frac{2}{n\eta^2(1-\mu)}\mathbb{E}||(2W+I)(I-W) X^{t}||_F^2\\
       \overset{(b)}{\leq} & \frac{1-(1-\mu)}{n} \mathbb{E}|| B^t||_F^2 + 8\sigma^2+\frac{2}{n(1-\mu)}\mathbb{E}||\mathbb{E}[G^t-\Bar{G}^t]||_F^2\\
      &+\frac{18}{n\eta^2(1-\mu)}\mathbb{E}||(I-W) X^{t}||_F^2\\
      = & \frac{1-(1-\mu)}{n} \mathbb{E}|| B^t||_F^2 + 8\sigma^2+\frac{2}{n(1-\mu)}\mathbb{E}||\mathbb{E}[G^t-\Bar{G}^t]||_F^2\\
      &+\frac{18}{n\eta^2(1-\mu)}\mathbb{E}||(I-W)( X^{t}-\Bar{X}^t)||_F^2\\
       \leq & \frac{1-(1-\mu)}{n} \mathbb{E}|| B^t||_F^2 + 8\sigma^2 +\frac{36}{n\eta^2(1-\mu)}\mathbb{E}||X^{t}-\Bar{X}^)||_F^2\\
      &+\frac{2}{n(1-\mu)}\mathbb{E}||\mathbb{E}[G^t] \pm \nabla f(\Bar{x}^t)-\mathbb{E}[\Bar{G}^t]||_F^2\\
      \leq & \frac{1-(1-\mu)}{n} \mathbb{E}|| B^t||_F^2 + 8\sigma^2 +\frac{36}{n\eta^2(1-\mu)}\mathbb{E}||X^{t}-\Bar{X}^)||_F^2\\
      &+\frac{8 \zeta^2}{1-\mu}+ \frac{4L^2}{n(1-\mu)}\mathbb{E}||X^{t}-\Bar{X}^)||_F^2\\
       = & \frac{1-(1-\mu)}{n} \mathbb{E}|| B^t||_F^2 + 8\sigma^2 +\frac{4(9+\eta^2 L^2)}{n\eta^2(1-\mu)}\mathbb{E}||X^{t}-\Bar{X}^)||_F^2+\frac{8 \zeta^2}{1-\mu}\\
    \end{align*}
    Multiplying both sides with $\frac{6 \eta^2 \mu^2}{\rho (1-\mu)}$
    \begin{align*}
     \frac{6 \eta^2 \mu^2}{n \rho (1-\mu)}\mathbb{E}||B^{t+1}||_F^2 \leq & \Big(\frac{6 \eta^2 \mu^2}{n \rho (1-\mu)} - \frac{6 \eta^2 \mu^2}{n \rho} \Big) \mathbb{E}|| B^t||_F^2  +\frac{24(9+\eta^2 L^2)\mu^2}{n\rho(1-\mu)^2}\mathbb{E}||X^{t}-\Bar{X}^)||_F^2\\
     &+  \frac{48 \eta^2 \mu^2 \sigma^2}{\rho (1-\mu)}  +\frac{48 \eta^2 \mu^2 \zeta^2}{\rho (1-\mu)^2} \\
     \overset{(c)}{\leq} & \Big(\frac{6 \eta^2 \mu^2}{n \rho (1-\mu)} - \frac{6 \eta^2 \mu^2}{n \rho} \Big) \mathbb{E}|| B^t||_F^2  +\frac{\rho}{8n}\mathbb{E}||X^{t}-\Bar{X}^)||_F^2\\
     &+  \frac{\eta^2 \rho \sigma^2(1-\mu)}{8}  +\frac{ \eta^2 \rho \zeta^2}{8} \\
    \end{align*}
    \endgroup
    Note that  $W-I<I$,\hspace{2mm} $I-W<2I$ , \hspace{2mm} $(W-I)\Bar{X}^t=0$ and $(W-I)\Bar{G}^t=0$.
    (a) follows from the fact that $||a+b||^2 \leq (1+\alpha) ||a||^2 +(1+\frac{1}{\alpha})||b||^2 \hspace{2mm} \forall \alpha>0$ and let $\alpha=\frac{1-\mu}{\mu}$.
    (b) uses the fact that $||AB||_F^2 \leq \sigma_{max}^2(A) ||B||_F^2$ where $A=2W+I$, $B=(I-W)X^t$ and $\sigma_{max}^2(A)=9$.
    (c) uses the assumption $\frac{\mu}{1-\mu}\leq \frac{\rho}{42}$ and $\eta \leq \frac{\rho}{7L}$. This implies that $\frac{24(9+\eta^2 L^2) \mu^2}{\rho (1-\mu)^2}\leq \frac{\rho}{8}$ and $\frac{48 \mu^2}{\rho (1-\mu)^2} \leq \frac{\rho}{8}$
\end{proof}

We present the proof for Theorem~\ref{theorem_1} using Lemmas.~\ref{lemma2}, \ref{lemma3}, and \ref{lemma4}.
Adding Lemmas.~\ref{lemma3}, and \ref{lemma4} and simplifying, we get

\begin{equation}
    \label{eq:combined}
    \begin{split}
        \frac{24 L^2 \eta}{n} \mathbb{E} ||X^{t+1} - \Bar{X}^{t+1}||_F^2 &+\frac{144 L^2 \eta^3 \mu^2}{n \rho^2 (1-\mu)} \mathbb{E} ||B^{t+1}||_F^2 \leq \Big(\frac{24 L^2 \eta}{n}- \frac{3 L^2 \eta}{n} \Big)\mathbb{E} ||X^{t} - \Bar{X}^{t}||_F^2 \\
        &+\frac{144 L^2 \eta^3 \mu^2}{n \rho^2 (1-\mu)} \mathbb{E} ||B^{t}||_F^2 + \frac{312 L^2 \eta^3}{\rho^2}(\zeta^2+\sigma^2(2-\mu))\\
    \end{split}
\end{equation} 

Finally, define a function $\Phi^t$ as shown below

\begin{equation}
    \label{eq:phi}
    \begin{split}
        \Phi^t = \frac{24 L^2 \eta}{n} \mathbb{E} ||X^{t} - \Bar{X}^{t}||_F^2 +\frac{144 L^2 \eta^3 \mu^2}{n \rho^2 (1-\mu)} \mathbb{E} ||B^{t}||_F^2 + \mathbb{E}[f(\Bar{x}^t)-f^*]
    \end{split}
\end{equation} 

Now adding Lemma~\ref{lemma2} and \eqref{eq:combined}, we have the following

\begin{equation*}
    \begin{split}
        \Phi^{t+1} \leq & \Phi^{t} -\frac{\eta}{4}\mathbb{E} ||\frac{1}{n} \sum_{i=1}^n  \nabla f_i(x_i^t)||^2 - \frac{\eta}{4} \mathbb{E} || \nabla f(\Bar{x}^t)||^2+ \frac{L \eta^2 \sigma^2}{n}+ \frac{312 L^2 \eta^3}{\rho^2}(\zeta^2+\sigma^2(2-\mu))\\
        \leq & \Phi^{t} - \frac{\eta}{4} \mathbb{E} || \nabla f(\Bar{x}^t)||^2+ \frac{L \eta^2 \sigma^2}{n}+ \frac{312 L^2 \eta^3}{\rho^2}(\zeta^2+\sigma^2(2-\mu))\\
        \implies   &\frac{\eta}{4} \mathbb{E} || \nabla f(\Bar{x}^t)||^2 \leq (\Phi^{t}  - \Phi^{t+1} ) + \frac{L \eta^2 \sigma^2}{n}+ \frac{312 L^2 \eta^3}{\rho^2}(\zeta^2+\sigma^2(2-\mu))
    \end{split}
\end{equation*} 

Summing over t

\begin{equation}
\label{eq:final}
    \begin{split}
       \frac{1}{T} \sum_{t=0}^{T-1} \mathbb{E} || \nabla f(\Bar{x}^t)||^2 \leq \frac{4}{\eta T}(f(\Bar{x}^0  - f^* ) + \eta \frac{4 L \sigma^2}{n}+ \eta^2 \frac{1248 L^2 }{\rho^2}(\zeta^2+\sigma^2(2-\mu)).
    \end{split}
\end{equation}

This concludes the proof of the Theorem~\ref{theorem_1}.

\subsection{Proof of Corollary.~\ref{corol}}

In the proof of Theorem~\ref{theorem_1}, we assumed the following
the following constraints on learning rate $\eta$ and scaling factor $\mu$:

\begin{equation*}
\begin{split}
    (i)  \hspace{2mm}& \eta \leq \min\Big\{\frac{1}{4L}, \frac{\rho}{7L}\Big\}\\
    (ii) \hspace{2mm}& \frac{\mu}{1-\mu} \leq \frac{\rho}{42}.
\end{split}
\end{equation*}
We assume that the step size $\eta$ is $\mathcal{O}(\sqrt{\frac{n}{T}})$, where $n$ is the total number of agents and $T$ is the number of iterations.
Given this assumption, we have the following order of each term in \eqref{eq:final} of Theorem~\ref{theorem_1}.

\[\frac{4}{\eta T}(f(\Bar{x}^0  - f^* )=\mathcal{O}\Big(\frac{1}{\sqrt{nT}}\Big).\]
For the remaining terms we have,
\[\eta \frac{4 L \sigma^2}{n} =\mathcal{O}\Big(\frac{1}{\sqrt{nT}}\Big), \hspace{3mm} \eta^2 \frac{1248 L^2 }{\rho^2}(\zeta^2+\sigma^2(2-\mu))=\mathcal{O}\Big(\frac{n}{T}\Big).\]

Therefore, by omitting the constant $n$ in this context of higher order terms, there exists a constant $C>0$ such that the overall convergence rate is as follows:
\begin{equation*}
     \frac{1}{T} \sum_{t=0}^{T-1} \mathbb{E} || \nabla f(\Bar{x}^t)||^2 \leq C\Bigg(\frac{1}{\sqrt{nT}}+\frac{1}{T}\Bigg), 
\end{equation*}
which suggests when $T$ is sufficiently large, \textit{GUT} enables the convergence rate of $\mathcal{O}(\frac{1}{\sqrt{nT}})$.


\section{Algorithmic details}
\label{appendix:algo}
In this section, we present the pseudo-code for \textit{QG-GUTm} which combines the proposed \textit{GUT} algorithm with quasi-global momentum as Algorithm~\ref{alg:QG-GUTm} and its PyTorch implementation version is shown in Algorithm~\ref{alg:GUTm-py}. We also summarize the memory-efficient implementation of \textit{GUT} in Algorithm~\ref{alg:GUT_memory}.

\begin{algorithm}[ht]
\textbf{Input:} Each agent $i \in [1,n]$ initializes model weights $x_i^{(0)}$ and neighbors' copy $\hat{x}_j^{(0)}$, momentum buffer $m_i^{(0)}$, step size $\eta$, momentum coefficient $\beta$, mixing matrix $W=[w_{ij}]_{i,j \in [1,n]}$, \textit{GUT} scaling factor $\mu$, $I_{ij}$ are elements of $n\times n$ identity matrix, $\mathcal{N}(i)$ represents neighbors of $i$ including itself, and note $\hat{x}_i^{t}=x_i^{t}$.\\

Each agent simultaneously implements the 
T\text{\scriptsize RAIN}( ) procedure\\
1.  \textbf{procedure} T\text{\scriptsize RAIN}( ) \\
2.  \hspace{4mm}\textbf{for} t=$0,1,\hdots,T-1$ \textbf{do}\\
3.  \hspace*{8mm}$d_i^{t} \sim D_i$\\
4.  \hspace*{8mm}$g_{i}^{t}=\nabla_x f_i(d_i^{t}; \sum_{j\in \mathcal{N}(i)}w_{ij}*\hat{x}_j^{t}) $ \\
5.  \hspace*{8mm}$\delta_i^{t}= g_{i}^{t} - \frac{1}{\eta}\sum_{j\in \mathcal{N}(i)}(w_{ij}-I_{ij})*\hat{x}_j^{t} $\\

6. \hspace*{8mm} $ y_i^{t}= \delta_i^t + \mu \big[\sum\limits_{j\in \mathcal{N}(i)}w_{ij}(m_j^{t-1}-\frac{1}{\eta}(\hat{x}_j^{t}-x_i^{t}))  - \delta_i^{t-1}\big]$\\
7. \hspace*{8mm}$m_i^{t}= \beta m_i^{t-1} + (1-\beta) y_i^{t}$\\
8.  \hspace*{8mm}S\text{\scriptsize END}R\text{\scriptsize ECEIVE}($ m_i^t$) \\
9.  \hspace*{8mm}$x_i^{t+1}= x_i^{t} - \eta m_i^t$ \\
10.  \hspace*{8mm}$\hat{x}_j^{t+1}=\hat{x}_j^{t}-\eta m_j^{t} \hspace{2mm} \forall \hspace{2mm} j \in N(i) \backslash i$\\
11. \hspace{4mm}\textbf{end}\\
12.  \textbf{return}
\caption{Global Update Tracking with momentum (\textit{QG-GUTm})}
\label{alg:QG-GUTm}
\end{algorithm}

\begin{algorithm}[ht]
\textbf{Input:} Each agent $i \in [1,n]$ initializes model weights $x_i^{(0)}$ and neighbors' copy $\hat{x}_j^{(0)}$, momentum buffer $m_i^{(0)}$, step size $\eta$, momentum coefficient $\beta$, mixing matrix $W=[w_{ij}]_{i,j \in [1,n]}$, \textit{GUT} scaling factor $\mu$, $I_{ij}$ are elements of $n\times n$ identity matrix, $\mathcal{N}(i)$ represents neighbors of $i$ including itself, and note $\hat{x}_i^{t}=x_i^{t}$.\\

Each agent simultaneously implements the 
T\text{\scriptsize RAIN}( ) procedure\\
1.  \textbf{procedure} T\text{\scriptsize RAIN}( ) \\
2.  \hspace{4mm}\textbf{for} t=$0,1,\hdots,T-1$ \textbf{do}\\
3.  \hspace*{8mm}$d_i^{t} \sim D_i$\\
4.  \hspace*{8mm}$g_{i}^{t}=\nabla_x f_i(d_i^{t}; \sum_{j\in \mathcal{N}(i)}w_{ij}*\hat{x}_j^{t}) $ \\
5.  \hspace*{8mm}$\delta_i^{t}= g_{i}^{t} - \frac{1}{\eta}\sum_{j\in \mathcal{N}(i)}(w_{ij}-I_{ij})*\hat{x}_j^{t} $\\

6. \hspace*{8mm} $ y_i^{t}= \delta_i^t + \mu \big[\sum\limits_{j\in \mathcal{N}(i)}w_{ij}(m_j^{t-1}-\frac{1+\beta}{\eta}(\hat{x}_j^{t}-x_i^{t}))  - \delta_i^{t-1}\big]$\\
7. \hspace*{8mm}$m_i^{t}= \beta m_i^{t-1} + y_i^{t}$\\
8.  \hspace*{8mm}S\text{\scriptsize END}R\text{\scriptsize ECEIVE}($ m_i^t$) \\
9.  \hspace*{8mm}$x_i^{t+1}= x_i^{t} - \eta m_i^t$ \\
10.  \hspace*{8mm}$\hat{x}_j^{t+1}=\hat{x}_j^{t}-\eta m_j^{t} \hspace{2mm} \forall \hspace{2mm} j \in N(i) \backslash i$\\
11. \hspace{4mm}\textbf{end}\\
12.  \textbf{return}
\caption{Global Update Tracking with momentum (\textit{QG-GUTm}) -- Pytorch Implementation}
\label{alg:GUTm-py}
\end{algorithm}

\begin{algorithm}[ht]
\textbf{Input:} Each agent $i \in [1,n]$ initializes model parameters $x_i^{0}$ and weighted model parameters of neighborhood $s_i^{0}$, step size $\eta$, \textit{GUT} scaling factor $\mu$, mixing matrix $W=[w_{ij}]_{i,j \in [1,n]}$, $\mathcal{N}(i)$ represents neighbors of $i$ including itself, and note $\hat{x}_i^{t}=x_i^{t}$.\\

Each agent simultaneously implements the 
T\text{\scriptsize RAIN}( ) procedure\\
1.  \textbf{procedure} T\text{\scriptsize RAIN}( ) \\
2.  \hspace{4mm}\textbf{for} t = $0,1,\hdots,T-1$ \textbf{do}\\
3.  \hspace*{8mm}$d_i^{t} \sim D_i$\\
4.  \hspace*{8mm}$g_{i}^{t}=\nabla_x F_i(s_i^{t}; d_i^t) $ \\
5.  \hspace*{8mm}$\delta_i^{t}= g_{i}^{t} - \frac{1}{\eta}(s_i^t-x_i^t)$\\
6. \hspace*{8mm}$y_i^{t}= \delta_i^t + \mu \Big[\sum\limits_{j\in \mathcal{N}(i)}w_{ij}y_j^{t-1}-\frac{1}{\eta}(s_i^{t}-x_i^{t})  - \delta_i^{t-1}\Big]$\\
7.  \hspace*{8mm}S\text{\scriptsize END}R\text{\scriptsize ECEIVE}($ y_i^t$) \\
8.  \hspace*{8mm}$x_i^{t+1}= x_i^{t} - \eta y_i^t$ \\
9.  \hspace*{8mm}$s_i^{t+1}=s_i^{t}-\eta \sum_{j\in \mathcal{N}(i)} w_{ij} y_j$\\
10. \hspace{4mm}\textbf{end}\\
11.  \textbf{return $\frac{1}{n}\sum_{i=1}^n x_i^{T}$}
\caption{Global Update Tracking (Memory Efficient Implementation)}
\label{alg:GUT_memory}
\end{algorithm}

\section{Decentralized Learning Setup}
\label{apx:dl}
For the decentralized setup, we use an undirected ring, undirected Dyck graph, and undirected torus graph topologies with a uniform mixing matrix. The undirected ring topology for any graph size has 3 peers per agent including itself and each edge has a weight of $\frac{1}{3}$. The undirected Dyck topology with 32 agents has 4 peers per agent including itself and each edge has a weight of $\frac{1}{4}$. The undirected torus topology with 32 agents has 5 peers per agent including itself and each edge has a weight of $\frac{1}{5}$.
All our experiments were conducted on a system with Nvidia GTX 1080ti card with 4 GPUs except for ImageNette simulations. We used NVIDIA A40 card with 4 GPUs for ImageNette simulations.

\subsection{Datasets}
\label{apx:datasets}
In this section, we give a brief description of the datasets used in our experiments. We use a diverse set of datasets each originating from a different distribution of images to show the generalizability of the proposed techniques.

\textbf{CIFAR-10:} 
CIFAR-10 \cite{cifar} is an image classification dataset with 10 classes. The image samples are colored (3 input channels) and have a resolution of $32 \times 32$. 
There are $50,000$ training samples with $5000$ samples per class and $10,000$ test samples with $1000$ samples per class.

\textbf{CIFAR-100:} 
CIFAR-100 \cite{cifar} is an image classification dataset with 100 classes. The image samples are colored (3 input channels) and have a resolution of $32 \times 32$. There are $50,000$ training samples with $500$ samples per class and $10,000$ test samples with $100$ samples per class. CIFAR-100 classification is a harder task compared to CIFAR-10 as it has 100 classes with very few samples per class to learn from.

\textbf{Fashion MNIST:}
Fashion MNIST \cite{fmnist} is an image classification dataset with 10 classes. The image samples are in greyscale (1 input channel) and have a resolution of $28 \times 28$. There are $60,000$ training samples with $6000$ samples per class and $10,000$ test samples with $1000$ samples per class.

\textbf{Imagenette:}
Imagenette \cite{imagenette} is a 10-class subset of the ImageNet dataset. The image samples are colored (3 input channels) and have a resolution of $224 \times 224$. There are $9469$ training samples with roughly $950$ samples per class and $3925$ test samples.

\subsection{Network Architecture}
\label{apx:arch}
We replace ReLU+BatchNorm layers of all the model architectures with EvoNorm-S0 as it was shown to be better suited for decentralized learning over non-IID distributions.

\textbf{VGG-11:} We modify the standard VGG-11 \cite{vgg} architecture by reducing the number of filters in each convolutional layer by $4\times$ and using only one dense layer with 128 units. 
Each convolutional layer is followed by EvoNorm-S0 as the activation-normalization layer. VGG-11 has $0.58M$ trainable parameters.

\textbf{ResNet-20:} For ResNet-20 \cite{resnet}, we use the standard architecture with $0.27M$ trainable parameters except that BatchNorm+ReLU layers are replaced by EvoNorm-S0.

\textbf{LeNet-5:} For LeNet-5 \cite{lenet}, we use the standard architecture with $61,706$ trainable parameters.

\textbf{MobileNet-V2:} We use the the standard MobileNet-V2 \cite{mobilnetv2} architecture used for CIFAR dataset with $2.3M$ parameters except that BatchNorm+ReLU layers are replaced by EvoNorm-S0.

\subsection{Hyper-parameters}
This section presents a detailed description of the hyper-parameters used in our experiments. All the experiments were run for three randomly chosen seeds. We decay the step size by 10x after 50\% and 75\% of the training, unless mentioned otherwise.
The hyper-parameter $\mu$ is set to 0.9 for all the experiments using \textit{GUT} optimizer. We used grid search to choose the hyper-parameter $\mu$ for \textit{QG-GUTm}.

\textbf{Hyper-parameters for experiments in Table~\ref{tab:cf10}:}
All the experiments have the stopping criteria set to 200 epochs. The initial learning rate is set to 0.1. 
We decay the step size by $10\times$ in multiple steps at $100^{th}$ and $150^{th}$ epoch. Table~\ref{tab:cf10-hp} presents values of the scaling factor $\mu$ used in the experiments. For all the experiments, we use a mini-batch size of 32 per agent. The stopping criteria is a fixed number of epochs. We have used  a momentum of 0.9 for all QG-DSDm and \textit{QG-GUTm} experiments.

\begin{table}[ht]
\caption{The value of scaling factor $\mu$ used for training CIFAR-10 with non-IID data using ResNet-20 and VGG-11 model architectures presented in Table~\ref{tab:cf10}}
\label{tab:cf10-hp}
\small
\begin{center}
\begin{tabular*}{\textwidth}{@{\extracolsep{\fill}}*{5}{c}}
\hline
\multirow{ 2}{*}{Agents ($n$)} &\multirow{ 2}{*}{Method}& \multicolumn{3}{c}{ResNet-20} \\
\cline{3-5}  
& & $\alpha=1$ & $\alpha=0.1$ & $\alpha=0.01$\\
 \hline

 & DSGD &  $0.0$ & $0.0 $ & $0.0$ \\
 & \textit{GUT (ours)}& $0.9 $ & $0.9$ & $0.9$ \\
 16& QG-DSGDm &$0.0$ &$0.0$ & $0.0$\\
 & \textit{QG-GUTm (ours)} & $0.04$ & $0.06$ & $0.04$ \\
 \hline
 & DSGD &  $ 0.0 $ & $0.0$ & $0.0$\\
 & \textit{GUT (ours)} & $0.9 $ & $ 0.9$ & $0.9$\\
32 & QG-DSGDm & $0.0$ & $0.0$ & $0.0$\\
 & \textit{QG-GUTm (ours)}  & $0.04$ & $0.04$ & $0.04$\\
 \hline
 \hline
\multirow{ 2}{*}{Agents ($n$)} &\multirow{ 2}{*}{Method} & \multicolumn{3}{c}{VGG-11} \\
 \cline{3-5}  
& & $\alpha=1$ & $\alpha=0.1$ & $\alpha=0.01$ \\
 \hline
  & DSGD &  $0.0$ & $0.0 $ & $0.0$ \\
 & \textit{GUT (ours)}& $0.9 $ & $0.9$ & $0.9$ \\
 16& QG-DSGDm &$0.0$ &$0.0$ & $0.0$\\
 & \textit{QG-GUTm (ours)} & $0.06$ & $0.08$ & $0.09$ \\
 \hline
 & DSGD &  $ 0.0 $ & $0.0$ & $0.0$\\
 & \textit{GUT (ours)} & $0.9 $ & $ 0.9$ & $0.9$\\
 32& QG-DSGDm & $0.0$ & $0.0$ & $0.0$\\
 & \textit{QG-GUTm (ours)}  & $0.08$ & $0.08$ & $0.08$\\
 \hline
\end{tabular*}
\end{center}
\end{table}

\textbf{Hyper-parameters for experiments in Table~\ref{tab:topologies}:}
All the experiments have the stopping criteria set to 200 epochs. The initial learning rate is set to 0.1. 
We decay the step size by $10\times$ in multiple steps at $100^{th}$ and $150^{th}$ epoch. Table~\ref{tab:topologies-hp} presents values of the scaling factor $\mu$ used in the experiments. For all the experiments, we use a mini-batch size of 32 per agent. The stopping criteria is a fixed number of epochs. For all QG-DSDm and \textit{QG-GUTm} experiments, we have used a momentum of 0.9 and Nesterov is set to False. We did not use any regularization in our experiments on non-IID data i.e., weight decay is set to zero. We set the weight decay to be $1e^{-4}$ for experiments on IID data (DSGDm case in Table.~\ref{tab:cf10}).

\begin{table}[ht]
\caption{The value of scaling factor $\mu$ used for training CIFAR-10 with non-IID data using ResNet-20 model architecture over varies graph topologies presented in Table~\ref{tab:topologies}}
\label{tab:topologies-hp}
\small
\begin{center}
\begin{tabular}{cccccc}
\hline
&\multicolumn{2}{c}{Dyck Graph (32 agents)} & &\multicolumn{2}{c}{Torus (32 agents)}\\
\cline{2-3} \cline{5-6}
 &$\alpha=0.1$   &$\alpha=0.01$ &&$\alpha=0.1$  &$\alpha=0.01$\\
\hline
QG-DSGDm & $ 0.0 $ & $ 0.0 $ & & $ 0.0$ &$ 0.0 $ \\
QG-GUTm & $ 0.05 $ & $ 0.05 $ & &$ 0.05$ & $ 0.05 $\\
\hline
\end{tabular}
\end{center}
\end{table}

\textbf{Hyper-parameters for experiments in Table~\ref{tab:datasets}:}
All the experiments with Fashion-MNIST and Imagenette datasets have the stopping criteria set to 100 epochs where as CIFAR-100 experiments have the stopping criteria as 200 epochs. The initial learning rate is set to 0.1 for experiments on CIFAR-100 and Fashion MNIST datasets. 
The initial learning rate is set to 0.01 for experiments on the Imagenette dataset. 
We decay the step size by $10\times$ in multiple steps at $50^{th}$ and $75^{th}$ epoch. Table~\ref{tab:datasets-hp} presents values of the scaling factor $\mu$ used in the experiments. For all the experiments, we use a mini-batch size of 32 per agent. The stopping criteria is a fixed number of epochs. We have used  a momentum of 0.9 for all QG-DSDm and \textit{QG-GUTm} experiments.

\begin{table}[ht]
\caption{The value of scaling factor $\mu$ used for training different datasets over 16 agents ring topology presented in Table~\ref{tab:datasets}
}
\label{tab:datasets-hp}
\small
\begin{center}
\resizebox{1.0\columnwidth}{!}{
\begin{tabular}{ccccccc}
\hline
 \multirow{2}{*}{Method} & \multicolumn{2}{c}{Fashion MNIST (LeNet-5)} & \multicolumn{2}{c}{CIFAR-100 (ResNet-20)} & \multicolumn{2}{c}{Imagenette (MobileNet-V2)}\\
 \cline{2-7}
 & $\alpha=0.1$ & $\alpha=0.01$ & $\alpha=0.1$ & $\alpha=0.01$ & $\alpha=0.1$ & $\alpha=0.01$\\
\hline
 QG-DSGDm & $ 0.0 $ & $0.0 $ & $ 0.0 $ & $0.0 $ & $0.0 $ & $0.0$\\
 \textit{QG-GUTm} & $ 0.01 $ & $ 0.005 $ & $ 0.005 $ & $ 0.005 $ & $ 0.03 $ & $0.04$\\
\hline
\end{tabular}
}
\end{center}
\end{table}

\section{Additional Results}
\label{apx:results}

Table.~\ref{appx_tab:moment} shows that the quasi-global variant of Global Update Tracking surpasses the other existing methods even for a higher degree of heterogeneity i.e., $\alpha=0.01$. We also observe that the Nesterov momentum hurts the performance when the heterogeneity in the data distribution is high. 
Table.~\ref{apx_tab:cf10} compares the  DSGDm baseline with the proposed \textit{GUT} and \textit{QG-GUTm}. It shows that for a higher degree of heterogeneity, \textit{GUT} outperforms even the momentum version of DSGD. 

\begin{table}[ht]
\caption{Evaluating Global Update Tracking (GUT) with various versions of momentum using CIFAR-10 dataset trained on ResNet-20 architecture over 16 agents ring topology for $\alpha=0.01$.
}
\label{appx_tab:moment}
\small
\begin{center}
\resizebox{1.0\columnwidth}{!}{
\begin{tabular}{lccccc}
\hline
\multirow{ 2}{*}{Method} & Local & Nesterov & Quasi-Global & Global Update & Test Accuracy\\
&Momentum &&Momentum&Tracking& $\alpha=0.01$\\
\hline
DSGD & \text{\sffamily x}  & \text{\sffamily x} &\text{\sffamily x} & \text{\sffamily x}& $54.66\pm 4.74$\\
DSGDm & \checkmark&\text{\sffamily x} &\text{\sffamily x} & \text{\sffamily x}& $65.62\pm 4.95$\\
DSGDm-N & \checkmark& \checkmark&\text{\sffamily x} &\text{\sffamily x} & $63.66\pm 4.44$\\
\hline
QG-DSGDm &\text{\sffamily x} &\text{\sffamily x} & \checkmark& \text{\sffamily x} &$79.85\pm 2.13$\\
QG-DSGDm-N & \text{\sffamily x} &\checkmark & \checkmark&\text{\sffamily x} & $78.64\pm 2.14$ \\
\hline
GUT & \text{\sffamily x}& \text{\sffamily x}& \text{\sffamily x}&\checkmark & $70.16\pm 4.94$\\
GUTm &\checkmark &\text{\sffamily x} &\text{\sffamily x} & \checkmark& $64.25\pm 5.31$\\
GUTm-N & \checkmark &\checkmark &\text{\sffamily x} &\checkmark &$63.42 \pm 2.74$\\
QG-GUTm & \text{\sffamily x}& \text{\sffamily x}&\checkmark & \checkmark&$\mathbf{81.04}\pm 1.66$\\
QG-GUTm-N &\text{\sffamily x} &\checkmark & \checkmark&\checkmark &$80.09\pm 3.82$\\
 \hline
\end{tabular}
}
\end{center}
\end{table}

\begin{table}[t]
\caption{Average test accuracy of different decentralized algorithms evaluated on CIFAR-10, distributed with different degrees of heterogeneity (non-IID) for various models over ring topologies.}
\label{apx_tab:cf10}
\small
\begin{center}
\resizebox{1.0\columnwidth}{!}{
\begin{tabular*}{\textwidth}{@{\extracolsep{\fill}}*{5}{c}}
\hline
\multirow{ 2}{*}{Agents ($n$)} &\multirow{ 2}{*}{Method}& \multicolumn{3}{c}{ResNet-20} \\
\cline{3-5}  
& & $\alpha=1$ & $\alpha=0.1$ & $\alpha=0.01$\\
 \hline
 & \textit{GUT (ours)}& $84.72 \pm 0.20 $ & $81.86 \pm 1.99$ & $70.16 \pm 4.94 $ \\
 16& DSGDm &  $ 86.60 \pm 0.54$ &  $  79.87\pm 1.73$&  $ 65.62 \pm 4.95 $\\
 & \textit{QG-GUTm (ours)} & $ \mathbf{88.22} \pm 0.36$ & $\mathbf{86.44} \pm 0.36$ & $\mathbf{81.04} \pm 1.66$ \\
 \hline
 & \textit{GUT (ours)} & $ 79.24 \pm 0.33 $ & $ 76.07 \pm 0.23$ & $60.72 \pm 1.03$\\
  32& DSGDm &  $ 86.12 \pm 0.32$ &  $ 77.43 \pm 1.71$ &  $ 52.82 \pm 4.02 $\\
 & \textit{QG-GUTm (ours)}  & $\mathbf{87.48} \pm 0.33$ & $\mathbf{84.94} \pm 0.60$ & $\mathbf{72.04} \pm 3.18$\\
 \hline
 \hline
\multirow{ 2}{*}{Agents ($n$)} &\multirow{ 2}{*}{Method} & \multicolumn{3}{c}{VGG-11} \\
 \cline{3-5} 
& & $\alpha=1$ & $\alpha=0.1$ & $\alpha=0.01$ \\
 \hline
 & \textit{GUT (ours)}  &$ 82.12 \pm 0.09$ & $ 81.24 \pm 0.95$  & $ 76.62 \pm 1.37$  \\
  16& DSGDm &  $81.77  \pm 0.38$&  $ 74.20\pm 1.89$&  $ 58.44\pm 14.58$\\
 & \textit{QG-GUTm (ours)} & $\mathbf{84.46} \pm 0.33$ & $\mathbf{83.05} \pm 0.48$ & $\mathbf{78.32} \pm 1.03$  \\
 \hline
 & \textit{GUT (ours)}  & $ 80.37 \pm 0.33$ & $ 79.55 \pm 1.00$ &  $ 73.59 \pm 1.26$\\
  32& DSGDm & $ 81.89 \pm 0.29$ &  $ 74.73 \pm 0.73$ &  $ 61.60 \pm 2.80$\\
 & \textit{QG-GUTm (ours)} & $\mathbf{84.32} \pm 0.11$ & $\mathbf{83.39} \pm 0.38$ & $\mathbf{77.41} \pm 3.44$\\
 \hline
\end{tabular*}
}
\end{center}
\end{table}

\end{document}